\algnewcommand\algorithmicinput{\textbf{Input:}}
\algnewcommand\INPUT{\item[\algorithmicinput]}
\algnewcommand\algorithmicoutput{\textbf{Output:}}
\algnewcommand\OUTPUT{\item[\algorithmicoutput]}
\newcommand{\specialcell}[1]{\ifmeasuring@#1\else\omit$\displaystyle#1$\ignorespaces\fi}
\def\transpose{{\hbox{\tiny\it T}}}
\renewcommand{\Re}{{\mathbb R}}
\newcommand{\argmin}{\mathop{\rm arg\, min}}
\def\be{\begin{eqnarray}}
\def\ee{\end{eqnarray}}
\def\ben{\begin{eqnarray*}}
\def\een{\end{eqnarray*}}
\def\ddxp{{\mathchoice{\FRAC{1}{d^+}{dx}}%
{\FRAC{1}{d^+}{dx}}%
{\FRAC{3}{d^+}{dx}}%
{\FRAC{3}{d^+}{dx}}}}
\def\Ebox#1#2{%
\begin{center}
\includegraphics[width= #1\hsize]{#2} 
\end{center}}
\def\sq{$\Box$}
\def\qed{\ifmmode\Box\else{\unskip\nobreak\hfil
\penalty50\hskip1em\null\nobreak\hfil\sq
\parfillskip=0pt\finalhyphendemerits=0\endgraf}\fi\par\medbreak}
\newcommand{\bsq}{\rule{1.25ex}{1.25ex}}
\def\bqed{\ifmmode\bsq\else{\unskip\nobreak\hfil
\penalty50\hskip1em\null\nobreak\hfil\bsq
\parfillskip=0pt\finalhyphendemerits=0\endgraf}\fi\medskip}
\newsavebox{\junk}
\savebox{\junk}[1.6mm]{\hbox{$|\!|\!|$}}
\newcommand{\field}[1]{\mathbb{#1}}
\def\ZZ{\field{Z}}
\def\elig{\zeta}
\def\bfelig{\bfmath{\zeta}}
\newcommand{\one}{\hbox{\rm\large\textbf{1}}}
\def\bfmN{{\mbox{\protect\boldmath$N$}}} 
\def\bfmU{{\mbox{\protect\boldmath$U$}}}
\def\bftheta{\mbox{\boldmath$\theta$}}
\def\bfvarphi{\mbox{\boldmath$\varphi$}}
\def\til={{\widetilde =}}
\def\clA{{\cal A}}
\def\clE{{\cal E}}
\def\clN{{\cal N}}
\def\half{{\mathchoice{\textstyle \frac{1}{2}}%
{\frac{1}{2}}%
{\hbox{\tiny $\frac{1}{2}$}}%
{\hbox{\tiny $\frac{1}{2}$}} }}
\def\eqdef{\mathbin{:=}}
\def\Prob{{\sf P}}
\def\Expect{{\sf E}}
\def\epsy{\varepsilon}
\def\varble{\,\cdot\,}
\newtheorem{theorem}{Theorem}[section]
\newtheorem{proposition}[theorem]{Proposition}
\newtheorem{lemma}[theorem]{Lemma}
\def\Lemma#1{Lemma~\ref{#1}}
\def\Prop#1{Prop.~\ref{#1}}
\def\Section#1{Section~\ref{#1}}
\def\Figure#1{Figure~\ref{#1}} 
\newcommand{\oo}{\overline}
\def\barh{{\oo {h}}}
\def\Lvx{L_\infty^v}
\def\Lvx{L_\infty^V}
\def\Lvx{L_\infty^{v}}
\def\tilM{\widetilde{M}}
\def\bfmX{{\mbox{\protect\boldmath$X$}}}
\def\bftheta{{\mbox{\protect\boldmath$\theta$}}}
\def\FRAC#1#2#3{\genfrac{}{}{}{#1}{#2}{#3}}
\def\half{{\mathchoice{\FRAC{1}{1}{2}}%
{\FRAC{1}{1}{2}}%
{\FRAC{3}{1}{2}}%
{\FRAC{3}{1}{2}}}}
\def\tilpsi{\widetilde \psi}
\def\grad{\nabla}
\def\gradpsi{\nabla \psi}
\def\tilgrad{\nabla^\Sens\!}
\def\bfmath#1{{\mathchoice{\mbox{\boldmath$#1$}}%
{\mbox{\boldmath$#1$}}%
{\mbox{\boldmath$\scriptstyle#1$}}%
{\mbox{\boldmath$\scriptscriptstyle#1$}}}}
\def\bfmN{{\mbox{\protect\boldmath$N$}}}
\def\transpose{{\hbox{\tiny\it T}}}
\newcounter{rmnum}
\newenvironment{romannum}{\begin{list}{{\upshape (\roman{rmnum})}}{\usecounter{rmnum}
\setlength{\leftmargin}{12pt}
\setlength{\rightmargin}{10pt}
\setlength{\itemindent}{-1pt}
}}{\end{list}}
\newcounter{anum}
\newenvironment{Anum}[1]{\smallbreak
\noindent
{\bf{Assumption~{#1}}:}
\begin{list}{{\upshape \textbf{#1.\arabic{anum}}:}}{\usecounter{anum}
\setlength{\leftmargin}{12pt}
\setlength{\rightmargin}{12pt}
\setlength{\itemindent}{22pt}
}}{\end{list}}
\newlength{\noteWidth}
\long\def\notes#1{\ifinner
{\tiny #1}
\else
\marginpar{\parbox[t]{\noteWidth}{\raggedright\tiny #1}}
\fi}
\def\DeltaA{\Delta}  
\def\fee{\text{f}}
\def\dffA{{\cal A}}
\def\Sens{{\cal S}}
\def\tilM{\widetilde{M}}
\def\tilb{\widetilde{b}}
\newcommand{\Chi}{{%
\mathchoice{\mathord{\raisebox{1.5pt}{\scalebox{1.25}{$\chi$}}}}%
{\mathord{\raisebox{1.5pt}{\scalebox{1.25}{$\chi$}}}}%
{\mathord{\raisebox{1pt}{\scalebox{.75}{$\chi$}}}}%
{\mathord{\raisebox{.8pt}{\scalebox{.6}{$\chi$}}}}%
}}
\DeclareBoldMathCommand\bfSens{{\cal S}}
\begin{document}

\title{Differential Temporal Difference Learning 
	\\
}


\author{Adithya. M. Devraj
\thanks{Department of Electrical and Computer 
Engineering,
University of Florida, Gainesville, USA.
Email: {\tt adithyamdevraj@ufl.edu}}
\and
Ioannis Kontoyiannis
\thanks{
Department of Engineering,
University of Cambridge,
Trumpington Street, Cambridge CB2 1PZ, UK.
Email: {\tt i.kontoyiannis@end.cam.ac.uk}.          
}
\and
Sean. P. Meyn
\thanks{Department of Electrical and Computer 
Engineering,
University of Florida, Gainesville, USA.
Email: {\tt meyn@ece.ufl.edu}.
}}

\maketitle
\thispagestyle{empty}

\begin{abstract}

Value functions derived from Markov decision processes arise as a
central component of algorithms as well as performance metrics in many
statistics and engineering applications of machine learning.
Computation of the solution to the associated Bellman equations is
challenging in most practical cases of interest. A popular class of
approximation techniques, known as Temporal Difference (TD) learning
algorithms, are an important sub-class of general reinforcement
learning methods. The algorithms introduced in this work are intended
to resolve two well-known issues with TD-learning algorithms: Their
slow convergence due to very high variance, and the fact that, for the
problem of computing the relative value function, consistent algorithms
exist only in special cases. First we show that the gradients of these
value functions admit a representation that lends itself to algorithm
design. Based on this result, a new class of \textit{differential TD-learning}
algorithms is introduced. For Markovian models on Euclidean space with
smooth dynamics, the algorithms are shown to be consistent under
general conditions. Numerical results show dramatic variance reduction
in comparison to standard methods.

\medskip

{\small
	\noindent
	\textbf{Keywords:}  
	Reinforcement learning,  
	approximate dynamic programming,
	temporal difference learning,
	Poisson equation,
	stochastic optimal control}

\end{abstract}

\newpage

%

%
%
%
%

\section{Introduction}
\label{s:Intro}

A central task in the application of many machine learning methods
and control techniques is the (exact or approximate) computation 
of value functions arising from Markov decision processes. 
The class of {\em Temporal Difference} 
(TD) {\em learning} algorithms considered in this work 
is an important sub-class 
of the general family of {\em reinforcement learning} methods that performs this task. Our main 
contributions here are the introduction of a related family of 
TD-learning algorithms that enjoy better convergence properties 
than existing methods, and the rigorous theoretical analysis
of these algorithms.

The value functions considered in this work are based 
on a discrete-time Markov chain  
$\bfmX = \{X(t): t=0,1,2,\ldots\}$ taking values in $\Re^\ell$, 
and on an associated {\em cost function} $c:\Re^\ell \to \Re$.  
Our central modelling assumption throughout is that $\bfmX$
evolves according to the nonlinear state space model,
\begin{equation}
X(t+1) = a(X(t),N(t+1)), \quad t \geq 0,
\label{e:SP_disc}
\end{equation}
where $\bfmN \!=\! \{N(t)\!:\! t \! = \!0,1,2,\ldots\}$ is an $m$-dimensional 
disturbance sequence of independent and identically distributed
(i.i.d.) random variables, 
and  $a: \Re^{\ell + m} \to \Re^\ell$ is a continuous mapping.    
Under these assumptions, for all $t \geq 0$, $X(t+1)$ is a continuous function of 
the initial condition $X(0)=x$;  this observation is our starting 
point for the construction of effective algorithms for value function 
approximation.

We begin with some familiar background.

\subsection{Value functions}

Given a discount factor $\beta \in (0,1)$, the 
{\em discounted-cost value function} is defined as
\begin{equation}
h_\beta(x) \eqdef  \sum_{t=0}^{\infty} 
\beta^t \Expect [c(X(t)) \mid X(0) = x]\,, \qquad x \in \Re^\ell.
\label{e:DCOE_disc}
\end{equation}
It is known that $h_\beta$ solves the {\em Bellman equation} \cite{bertsi96a,CTCN}:
\begin{equation}
c(x) + \beta \Expect[h_\beta(X(t  +  1)) | X(t)  =  x ] - h_\beta (x)   = 0\,.
\label{e:DPEquDisc}
\end{equation}
The {\em average cost} is defined as the ergodic limit,
\begin{equation}
\eta \eqdef \lim_{n\rightarrow\infty} 
\frac{1}{n} \sum_{t=0}^{n-1} \Expect [c(X(t)) \mid X(0) = x]\, ,
\label{e:barc}
\end{equation}
where the limit exists and is independent of $x$ under the conditions 
discussed in \Section{s:Rep}. The following \emph{relative value function} 
is central to analysis of average cost control problems:
\begin{equation}
h(x) \eqdef  \sum_{t=0}^{\infty} \Expect [c(X(t))-\eta \mid X(0) = x]\,, 
\qquad x \in \Re^\ell.
\label{e:fish-sum}
\end{equation}
Provided the sum \eqref{e:fish-sum} exists for each $x$,  the   relative value function solves the  \emph{Poisson equation}  \cite{tsiroy99a,CTCN}: 
\begin{equation} 
\Expect[h(X(t+1)) \mid X(t) = x ]  - h(x) = - [c(x)-\eta]\, .
\label{e:ACOE_disc2}
\end{equation} 
These equations and their solutions are of interest in learning theory, 
control engineering, and many other fields, including:

\textit{Optimal control and Markov decision processes:}   
Policy iteration and actor-critic algorithms are designed to approximate an  
optimal policy using two-step procedures: First, given a policy, the 
associated value function is computed (or approximated), and then the policy is updated based 
on this value function \cite{bershr96a, kontsi03a}. These approaches can 
be used for both discounted- and average-cost optimal control problems.

\textit{Algorithm design for variance  reduction:}  
Under general conditions, the asymptotic variance (i.e., the variance 
appearing in the central limit theorem for the averages
in \eqref{e:barc}) is naturally expressed in terms of the relative value 
function $h$ \cite{asmgly07,MT}.  The method of control 
variates is intended to reduce the asymptotic variance of various 
Monte Carlo methods; a version of this technique involves 
the construction of 
an approximation to $h$
\cite{HenThesis,henmeytad03a,kyrkonmey08,delkon12,brodurmeymourad19}.
 
\textit{Nonlinear filtering:}  A recent approach to approximate nonlinear 
filtering requires the \emph{gradient} of the solution to Poisson's equation to obtain the 
``innovation gain" \cite{yanmehmey13,laumehmeyrag15}.  Approximations of 
the solution can lead to 
efficient implementations of this method \cite{tagmeh16a,raddevmey16,radmey18a}.

\subsection{TD-learning and value function approximation}

In most cases of practical interest, closed-form expressions for the value 
functions $h_\beta$ and $h$ in \eqref{e:DCOE_disc} or \eqref{e:ACOE_disc2}
cannot be derived. One approach to obtaining approximations is 
the   {\em Temporal Difference} (TD)
{\em learning} algorithm \cite{sutbar98,bertsi96a}.

In the case of the discounted-cost value function,
the goal of TD-learning is to approximate~$h_\beta$ 
as a member of a parametrized family of 
functions $\{h_\beta^\theta: \theta\in\Re^d\}$.   
Throughout, we restrict attention
to linear parametrizations of the form,
\begin{equation}
h_\beta^\theta = \sum_{j=1}^d \theta_j   \psi_j,
\label{e:hLinearPar}
\end{equation}
where we write
$\theta=(\theta_1,\theta_2,\ldots,\theta_d)^\transpose$, 
$\psi=(\psi_1,\psi_2,\ldots,\psi_d)^\transpose$, and we assume
that the given collection of
`basis' functions $\psi\colon\Re^\ell\to\Re^d$ is continuously differentiable.

For any function $f: \Re^\ell \to \Re$, and a given probability measure $\mu:\Re^\ell \to [0, 1]$, we define the $\mu$-norm:
\[
\begin{aligned}
\| f \|_{\mu} & \eqdef \Big( \Expect[ f^2(X) ] \Big)^{\half} \,, \quad X \sim \mu
\\
& = \Big( \displaystyle \int  f^2 (x) \mu(dx) \Big)^{\half}
\end{aligned}
\]
In one variant of the TD technique 
(the LSTD(1) algorithm, described in Section~\ref{s:dtdlambda}),
the optimal parameter vector~$\theta^*$ is 
chosen as 
the solution to a minimum-norm problem,
\begin{equation}
\begin{aligned}
\theta^* &= \argmin_\theta \| h_\beta^\theta - h_\beta\|^2_\pi 
\end{aligned}
\label{e:TDgoal}
\end{equation}
where the expectation is with respect to $X\sim \pi$,
and  $\pi$ denotes the steady-state distribution of the Markov chain $\bfmX$;
more details are provided in Sections~\ref{s:setup}
and~\ref{s:dtdlambda}.

A TD-learning algorithm is said to be \emph{consistent}, if the parameter estimates obtained using the algorithm converge to $\theta^*$.

Theory for TD-learning in the discounted-cost setting is largely complete, 
in the sense that criteria for convergence are well-understood, and the 
asymptotic variance of the algorithm is computable based on standard theory 
from stochastic approximation \cite{bor08a,devmey17a,devmey17b}
; see \cite{chedevbusmey19f,devmey20a,devbusmey20} for the relationship between asymptotic variance and convergence rate of TD-learning algorithms.
Theory and algorithms for the average-cost setting involving the
relative value function $h$ is more fragmented.  The 
optimal parameter $\theta^*$ in the analog of \eqref{e:TDgoal} 
with $h_\beta$ replaced by the relative value function $h$
can be computed using TD-learning techniques only for Markovian 
models that regenerate:  
there exists a state $x^\bullet \in \Re^\ell$ that is visited infinitely often 
(see the regenerative TD($1$) algorithm in \cite[pg. 1012]{kontsi00}, and also \cite{CTCN, huachemehmeysur11}).
 
Regeneration is often not a restrictive assumption.   However,  
the asymptotic variance of these algorithms grows with the variance of 
inter-regeneration  times (time between consecutive visits to the regenerative state $x^\bullet$).
The variance can be massive even in simple examples such as the M/M/1 queue;
see the final chapter of \cite{CTCN}. High variance is 
also predominantly observed in the discounted-cost case 
when the discounting factor is close to $1$;
see the relevant remarks in Section~\ref{s:summary}.

The \textit{differential} TD-learning 
algorithms developed in 
this paper are designed in part to resolve these issues.  The main idea is to estimate 
the  \emph{gradient} of the value function.  Under the conditions imposed, 
the asymptotic variance of the resulting algorithms remains uniformly bounded  
over $0 < \beta < 1$. And the same techniques can be applied to obtain 
finite-variance algorithms for approximating the 
relative value function $h$
for models without regeneration.   

We note that
the needs of the analysis of the algorithms 
presented here have, in part, motivated the development of rich new
convergence
theory for general classes of Markov processes
\cite{devkonmey17a}.
Indeed, the results in Sections~\ref{s:Rep}
and~\ref{s:TD} draw heavily on the 
convergence results established
in \cite{devkonmey17a}.

\subsection{Differential TD-learning}

Consider the discounted-cost setting; suppose that the value function
$h_\beta$ and 
all its potential approximations $\{h_\beta^\theta: \theta\in\Re^d\}$ 
are continuously differentiable
as functions of the state $x$,
i.e., $h_{\beta},\, h_{\beta}^{\theta} \, 
\in C^1$, for each $\theta \in \Re^d$.
In terms of the
linear parametrization~\eqref{e:hLinearPar},
we obtain approximations of the form:
\begin{equation}
\grad h_\beta^\theta = \sum_{j=1}^d \theta_j \grad \psi_j\,,
\label{e:gradhtheta}
\end{equation} 
where the gradient is with respect to the state $x$.

The {\em differential} LSTD-learning algorithm 
introduced in Section~\ref{s:TD}
is designed to compute 
the solution to  
\begin{equation}
\theta^* =   \argmin_\theta \Expect[ \| \grad h_\beta^\theta(X) 
- \grad h_\beta(X)\|_2^2 ]
\, ,\quad X\sim \pi\,,
\label{e:gradTD}
\end{equation}
where $\|\cdot\|_2$ is the usual Euclidean norm, and once again, $\pi$ denotes the steady-state distribution of the Markov chain $\bfmX$.

The value function approximation
$h_\beta^{\theta^*} $  is obtained via the addition of a constant:
\begin{equation}
h_\beta^{\theta^*} = \sum_{j=1}^d \theta_j ^*  \psi_j + \kappa (\theta^*).
\label{e:gradTDaddConstant}
\end{equation}
The mean-square optimal choice is obtained on requiring
\begin{equation}
\Expect[  h_\beta^{\theta^*}(X) -   h_\beta(X) ] =0
\, ,\quad X\sim \pi\,.
\label{e:LSTD_kappa_condition}
\end{equation}
{See the discussion that follows Algorithm~\ref{dLSTD} 
for details.}

A similar program can be carried out for 
the relative value function $h$, which,
viewed as a solution to Poisson's 
equation~(\ref{e:ACOE_disc2}),
is unique only up to an additive constant.
Therefore,
we can set $\kappa (\theta^*) =0$ in the average-cost setting.

\subsection{Summary of contributions}
\label{s:summary}
The main contributions of this work are:

$(i)$~The introduction of the new
{\em differential} Least 
Squares TD-learning ($\nabla$-LSTD, or `grad-LSTD') algorithm,
which is applicable
in both the discounted- and average-cost settings.

$(ii)$~The development of appropriate conditions under which
we can show that,
for linear parametrizations, 
$\nabla$-LSTD converges and
solves the quadratic 
program \eqref{e:gradTD}. 	

$(iii)$~The introduction of the family of 
$\nabla$-LSTD($\lambda$)-learning algorithms.
It is shown that $\nabla$-LSTD($1$) also solves 
the quadratic program \eqref{e:gradTD}.

 $(iv)$ These new algorithms are applicable for models 
that do not have regeneration.   
Their asymptotic variance is uniformly bounded over all $0<\beta<1$,
under general conditions.  

Perhaps the most important limitation of the $\nabla$-LSTD algorithms is the requirement of partial knowledge of the Markov chain 
transition dynamics.  \Section{s:conclusions} contains discussion on how to address this challenge.   Fortunately, in many applications, very little knowledge is required  
(such as in the queuing example discussed in \Section{s:dyn_ss}).

Finally, a few more remarks about the error rates of these algorithms
are in order.
From the definition of the value function \eqref{e:DCOE_disc},  it 
can be expected that  $ h_\beta(x)  \to \infty$   as $\beta \to 1$  for each $x\in\Re^\ell$.    
This is why approximation methods in reinforcement learning typically 
take for granted that error will grow at this rate.   Moreover, it is 
observed that variance in reinforcement learning can grow dramatically 
with the discount factor.  In particular,  it is shown in   
\cite{devmey17a,devmey20a} that asymptotic variance in the standard Q-learning algorithm 
of Watkins is \textit{infinite} when the
discount factor satisfies $\beta>1/2$.  

The family of TD($\lambda$) algorithms was introduced in \cite{sutbar98}
to reduce 
the variance of earlier methods, but 
it brings its own potential challenges.
Consider \cite[Theorem 1]{tsiroy97a},  which compares the estimate 
$ h^{\theta^\lambda}_ \beta $ obtained using TD($\lambda$),  
with the $L_2$-optimal approximation 
$ h^{\theta^*}_ \beta $ obtained using TD($1$):
\begin{equation}
\| h^{\theta^\lambda}_ \beta  -  h_\beta \|_{\pi} 
	\le \frac{1-\lambda \beta}{1-\beta}  \| h^{\theta^*}_ \beta  
	-  h_\beta \|_{\pi}.
\label{e:tsiroy97a_Theorem1}
\end{equation}
This bound suggests that the bias can grow 
as $(1-\beta)^{-1}$. 
 
The difficulties are more acute when we come 
to the average-cost problem. Consider the 
minimum-norm problem~(\ref{e:TDgoal})
with the relative 
value function $h$ in place of $h_\beta$:
\begin{equation} 
\theta^* = \argmin_\theta \| h^\theta - h\|^2_\pi .
\label{e:TDgoalAC}
\end{equation}
Here, for the TD($\lambda$) algorithm with $\lambda<1$,
Theorem~3 of \cite{tsiroy99a} implies a bound in 
terms of the ``convergence rate" $\rho$ for the Markov chain,
 \begin{equation}
\| h^{\theta^\lambda}_ \beta  -  h \|_{\pi} 
	\le  c(\lambda,\rho)  \| h^{\theta^*}  -  h \|_{\pi},
\label{e:tsivan99a_Theorem3}
\end{equation}
in which $ c(\lambda,\rho)>1$ and
$c(\lambda,\rho)\to 1$ as $\lambda\to 1$.   

Convergence of TD($1$) does not hold for the average cost problem.   Specialized algorithms making use of regeneration were introduced in 
\cite[pg. 1012]{kontsi00}, \cite{CTCN, huachemehmeysur11}.

For any differentiable function function $f: \Re^\ell \to \Re$, its gradient is denoted
\begin{equation}
\nabla f  \eqdef \Big [\frac{\partial }{\partial x_1} f \,, \ldots \,, \frac{\partial }{\partial x_\ell} f \Big ]^\transpose
\end{equation}
Under the assumptions imposed in this paper, 
we show that the gradients of the value 
functions are well behaved: $\{\nabla h_\beta : 0 < \beta \leq 1\}$ is a bounded 
collection of functions,  and
$ \nabla h_\beta  \to \nabla h$ uniformly on compact sets.     
As a consequence, both the bias and variance of
the new $\nabla$-LSTD($\lambda$) algorithms 
are bounded over all $0<\beta \leq 1$.

The remainder of the paper is organized as follows: Basic definitions and 
value function representations are presented in \Section{s:Rep}.  
The $\nabla$-LSTD-learning algorithm is introduced in  \Section{s:TD},
 and 
the family of $\nabla$-LSTD($\lambda$) algorithms are introduced 
in \Section{s:dtdlambda}.  
Results from numerical experiments are shown in \Section{s:sim}, 
and conclusions are contained in \Section{s:conclusions}.

\section{Representations and Approximations}
\label{s:Rep}

We begin with modelling assumptions on the Markov process
$\bfmX$, and representations for the value 
functions $h_\beta,h$ and their gradients.    

\subsection{Markovian model and value function gradients}
\label{s:setup}

The evolution equation  \eqref{e:SP_disc} defines a Markov chain  $\bfmX$ 
with transition semigroup $\{P^t\}$, where $P^t(x,A)$ is defined,
for all $t\ge 0$, any state $x\in\Re^\ell$, and 
every measurable $A\subset\Re^\ell$, via,
\[
P^t(x,A):=\Prob_x\{X(t)\in A\}:=\Pr\{X(t)\in A\,|\,X(0)=x\}.
\]
For $t=1$ we write $P=P^1$,  so that:
\[
P(x,A) = \Pr\{ a(x,N(1)) \in A\}\,,
\]
where we recall that $a: \Re^{\ell + m} \to \Re^\ell$ defines the dynamics of the Markov chain in \eqref{e:SP_disc}.

The first set of assumptions ensures that the value functions 
$h_\beta$ and $h$ are 
well-defined.  Fix a continuous function  $v\colon\Re^\ell\to [1,\infty)$ 
that serves as a weighting function. For any measurable function 
$f\colon\Re^\ell\to\Re$, the $v$-norm is defined as follows:  
\[
\|f\|_v \eqdef \sup_x  \frac{|f(x)|}{v(x)} \,,
\]
and the associated Banach space is denoted  
$$
\Lvx \eqdef \{ f: \Re^\ell \to \Re : \| f  \|_v < \infty \}.
$$ 
Also, for any measurable function $f$ and measure
$\mu$, we write $\mu(f)$ for the integral, $\mu(f):=\int fd\mu.$


\begin{Anum}{A1}
\item[]
\noindent
The Markov chain $\bfmX$ is \textit{$v$-uniformly ergodic}: 
It has a unique 
invariant probability measure $\pi$, and there 
exists a continuous function $v:\Re^\ell\to\Re$
and constants $b_0 < \infty$  
and $0<\rho_0 < 1$,  such that,
for each function $f\in\Lvx$,
$$\Big| \Expect \big [f(X(t)) \mid X(0) = x \big ] 
- \pi(f) \Big| \leq b_0 \rho_0^t  \|f\|_v v(x),$$
for all $x\in\Re^\ell,\,t\ge 0$
\end{Anum}

\noindent
Assumption~A1 is not a strong assumption:  it  is essentially equivalent to geometric ergodicity in the usual sense  (ignoring a set of measure zero): combine Theorems~15.0.2 and 16.0.1 of \cite{MT}.  There is also an exact equivalence between Assumption~A1  and the existence of a ``Lyapunov function''  \cite[Theorem~16.0.1]{MT}.  See \cite{MT} and   \Section{s:sim} of this paper for examples where the assumption holds.

The following consequences are immediate \cite{MT,CTCN}: 

\begin{proposition}
\label{t:v-uni-value}
Under assumption~A1,  for any cost function $c$ such that $\|c\|_v<\infty$,
the limit  $\eta$  in \eqref{e:barc} exists with 
$\eta := \pi(c) < \infty$, 
and is independent of the initial state $x$. 
The value functions $h_\beta$ and $h$ exist as expressed
in~\eqref{e:DCOE_disc} and~\eqref{e:fish-sum},
and they satisfy equations~\eqref{e:DPEquDisc}
and~\eqref{e:ACOE_disc2},
respectively.

Moreover, there exists 
a constant $b_c<\infty$ such that the following bounds hold:
\[
\begin{aligned}
|h(x)|& \leq b_c v(x)
\\
|h_\beta(x)| & \leq b_c \big( v(x) + (1-\beta)^{-1} \big)
\\
|h_\beta(x) - h_\beta(y) | & \leq b_c \big( v(x) +  v(y)\big)
\,, \qquad\qquad  x,y\in\Re^\ell.
\end{aligned}
\] 
\end{proposition}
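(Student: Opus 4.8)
The whole proposition should drop out of inserting the ergodicity bound \eqref{e:v-uni} into the series defining $h_\beta$ and $h$; the plan is essentially to manage a geometric tail, and I do not expect a serious obstacle, only one step (an interchange of limits) that needs a word of justification.

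First I would extract two easy consequences of A1. Applying \eqref{e:v-uni} with $f=v$ (so $\|v\|_v=1$) at $t=0$ and $t=1$ gives respectively $\pi(v)\le(1+b_0)v(x_0)<\infty$ for any fixed $x_0$, and $Pv(x)\le\pi(v)+b_0\rho_0v(x)\le b_1v(x)$ for some finite $b_1$; in particular $|\pi(c)|\le\|c\|_v\pi(v)<\infty$ whenever $c\in\Lvx$. Then, applying \eqref{e:v-uni} with $f=c$, averaging over $t=0,\dots,n-1$ and dividing by $n$, the deviation of the ergodic average in \eqref{e:barc} from $\pi(c)$ is bounded by $b_0\|c\|_vv(x)/[n(1-\rho_0)]\to0$; this shows the limit $\eta$ exists, equals $\pi(c)$, and is independent of $x$.

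Next I would turn to the value functions. Writing $P^tc(x):=\Expect[c(X(t))\mid X(0)=x]$, the bound \eqref{e:v-uni} gives $|P^tc(x)-\pi(c)|\le b_0\rho_0^t\|c\|_vv(x)$ for all $t$. Hence the series $h(x)=\sum_{t\ge0}\big(P^tc(x)-\eta\big)$ converges absolutely with $|h(x)|\le\tfrac{b_0\|c\|_v}{1-\rho_0}v(x)$. For $h_\beta$, I would split $\beta^tP^tc(x)=\beta^t\eta+\beta^t\big(P^tc(x)-\eta\big)$: the first piece sums to $\eta/(1-\beta)$, and the second is bounded in modulus, uniformly in $\beta\in(0,1)$, by $\sum_{t\ge0}\beta^tb_0\rho_0^t\|c\|_vv(x)\le\tfrac{b_0\|c\|_v}{1-\rho_0}v(x)$, which gives the first two displayed estimates. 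For the third, the constant $\eta$ cancels termwise in $h_\beta(x)-h_\beta(y)=\sum_{t\ge0}\beta^t\big[(P^tc(x)-\eta)-(P^tc(y)-\eta)\big]$, and bounding each term by $b_0\rho_0^t\|c\|_v\big(v(x)+v(y)\big)$ and summing yields $|h_\beta(x)-h_\beta(y)|\le\tfrac{b_0\|c\|_v}{1-\rho_0}\big(v(x)+v(y)\big)$. Taking $b_c$ to be the sum of $\|c\|_v\pi(v)$ and $b_0\|c\|_v/(1-\rho_0)$ then covers all three bounds.

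Finally, the Bellman and Poisson equations come from re-indexing the series: formally $Ph_\beta(x)=\sum_{t\ge0}\beta^tP^{t+1}c(x)=\beta^{-1}\big(h_\beta(x)-c(x)\big)$, which rearranges to \eqref{e:DPEquDisc}, and $Ph(x)=\sum_{t\ge0}\big(P^{t+1}c(x)-\eta\big)=h(x)-\big(c(x)-\eta\big)$, which is \eqref{e:ACOE_disc2}. The one point that requires genuine care — the main, and modest, obstacle — is justifying the interchange of the kernel $P$ with the infinite sum. I would handle this with the uniform geometric tail: if $g_n$ denotes the $n$th partial sum of the series for $h$ (resp.\ $h_\beta$), then $\|h-g_n\|_v\le\tfrac{b_0\|c\|_v}{1-\rho_0}\rho_0^n$, so $|Ph(x)-Pg_n(x)|\le\|h-g_n\|_v\,Pv(x)\le b_1\tfrac{b_0\|c\|_v}{1-\rho_0}\rho_0^nv(x)\to0$ using $Pv\le b_1v$ from the first step; passing to the limit in $Pg_n$ then delivers the stated identities. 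All of this is standard (cf.\ \cite{MT,CTCN}), and no step beyond this routine domination presents any real difficulty.
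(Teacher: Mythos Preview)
Your argument is correct and is precisely the standard derivation from $v$-uniform ergodicity. The paper does not actually prove this proposition: it simply declares the conclusions ``immediate'' and cites \cite{MT,CTCN}, so there is no paper proof to compare against beyond noting that your sketch is exactly what one finds in those references.
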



The following operator-theoretic notation will simplify exposition throughout.   
For any measurable function $f\colon\Re^\ell\to\Re$, the new function 
$P^t f: \Re^\ell \to \Re$ is defined as the conditional expectation:
\[
P^t f\, (x) = 
\Expect_x [f(X(t))  ]  \eqdef
\Expect [f(X(t)) \mid X(0) = x].
\]
For any $\beta\in(0,1)$, the  \textit{resolvent kernel} $R_\beta$ is   
the ``$z$-transform'' of the semigroup $\{P^t\}$,
\begin{equation}
R_\beta \eqdef \sum_{t = 0}^{\infty} \beta^t P^t.
\label{e:resolvent_disc}
\end{equation} 
Under the assumptions of \Prop{t:v-uni-value}, 
the discounted-cost value
function $h_\beta$ admits the representation,
\begin{equation}
\begin{aligned}
h_\beta = R_\beta c\, ,
\label{e:ResFormulah}
\end{aligned}
\end{equation} 
and similarly, for the relative value function $h$ we have,
\begin{equation}
h = R [c-\eta],
\label{e:ResFormulahAC}
\end{equation} 
where we write $R \equiv R_\beta$ when $\beta = 1$ 
\cite{MT,konmey03a,CTCN}.

The representations \eqref{e:ResFormulah} and \eqref{e:ResFormulahAC} are 
valuable in deriving the LSTD-learning algorithms \cite{bertsi96a,CTCN,sut88}.
Analogous representations 
for the gradients   are obtained in this paper: 
\[
\grad h_\beta =  \grad [ R_\beta c], \qquad \grad h =  \grad [ R c]\,.
\]

\subsection{Representation for the gradient of a value function} 
\label{s:rep}

In this section we describe the construction 
of operators $\Omega$ and $\Omega_\beta$, which satisfy the following:
\begin{equation}
\grad h_\beta =
\grad [ R_\beta c] =
\Omega_\beta \grad c\,,
\quad
\grad h = 
\grad [ R c] =
\Omega \grad c.
\label{e:OmDream}
\end{equation}
A more detailed account is given in \Section{s:derLSTD},
and a complete exposition of the underlying theory
together with the formal
justification of the existence and the relevant
properties of $\Omega$ and $\Omega_\beta$ can 
be found in \cite{devkonmey17a}. 

For the sake of simplicity, here we restrict 
our discussion to $h_\beta$ and its gradient. 
But it is not hard to see that the construction below 
easily generalizes to $\beta=1$; again, see
\Section{s:derLSTD} and \cite{devkonmey17a}
for the relevant details.

We require the following further assumptions:


\begin{Anum}{A2}
\item 
\noindent
The   process $\bfmN$ is independent
of $X(0)$.
\label{a:N}
\item
\noindent
The function $a$ is continuously differentiable in its first variable, 
with
\[
\sup_{x,n} \| \grad_x a (x,n)\| <\infty,
\]
where $\| \varble \|$ is any matrix norm, 
and the $\ell\times \ell$ matrix  $\grad_x a$
is defined as:
\[
[\grad a_x (x,n)]_{i,j} \eqdef \frac{\partial }{\partial x_i} (a (x,n))_j , 
\quad 1\leq i,j \leq \ell.
\]
 \label{a:nabla-a}
\end{Anum}
The first assumption, A2.\ref{a:N}, is critical so that the initial 
state $X(0)=x$ can be regarded as a   variable, with $X(t)$ a continuous function of $x$.   
This together with A2.\ref{a:nabla-a} allows us to define 
the \textit{sensitivity process} 
$\{\Sens(t)\}$, where, for each $t\geq 0$:
\begin{equation}
\Sens_{i,j} (t) \eqdef \frac{\partial X_i(t)}{\partial X_j(0)}, 
\quad 1\leq i,j \leq \ell.
\label{e:SensDef}
\end{equation}
The evolution equations \eqref{e:SP_disc} imply that the sensitivity process evolves as a random linear system,
\begin{equation}
\begin{aligned}
\Sens(t+1) &= \dffA(t+1) \Sens(t),\quad 
t\geq 0, 
\end{aligned}
\label{e:SP_sens_disc}
\end{equation}
with initial condition $\Sens(0)=I$,
where the $\ell\times \ell$ matrix
$\dffA(t)$ is defined as in
assumption A2.\ref{a:nabla-a}, by
\begin{equation}
\dffA^\transpose(t) \eqdef \grad_x a\, (X(t-1),N(t))
\label{e:dffA}
\end{equation}

For any $C^1$ function $f: \Re^\ell \to \Re$,  denote
\begin{equation}
\tilgrad f (X(t)) \eqdef \Sens^\transpose(t) \grad f(X(t))\,.
\label{e:tilgrad}
\end{equation} 
It follows from the chain rule that this coincides with the gradient of $f(X(t))$ with respect to the initial condition $x$:
\begin{equation}
\big [ \tilgrad f (X(t)) \big ]_i    = \frac{\partial  f(X(t))}{\partial{X_i(0)}},
\quad 1\leq i\leq \ell.
\label{e:nablaS_def}
\end{equation}
Equation \eqref{e:tilgrad} motivates the introduction of 
a semigroup $\{Q^t : t\ge 0\}$ of operators, 
whose domain includes functions $g:\Re^\ell\to\Re^\ell$
of the form $g = [g_{1}, \ldots, g_{\ell}]^{\transpose}$, with    
$g_i\in\Lvx$ for each $i$.  For $t=0$, $Q^0$ is the identity operator, 
and for $t\ge 1$, 
\begin{equation} 
Q^t g(x) \eqdef \Expect_x \bigl[ \Sens^\transpose(t) g(X(t))  \bigr].
\label{e:Qt} 
\end{equation}

Provided we can exchange the gradient and the expectation, following \eqref{e:nablaS_def},
we have,
\[ 
\frac{\partial}{\partial x_i} \Expect_x[f(X(t)) ]  
= \Expect_x \bigl[ [\tilgrad f(X(t))]_i  \bigr] ,
\quad 1\leq i\leq\ell,
\]
and consequently, the following elegant formula is obtained:
\begin{equation}
 \grad P^t f(x)  = \Expect_x [\tilgrad f(X(t))]  
= Q^t \grad f\,(x),\;\;\; x\in \Re^\ell.
\label{e:elegantQ}
\end{equation}
Justification requires minimal assumptions on the function $f$.
The proof of \Prop{t:elegantQ} is based on Lemmas~\ref{t:FellerGen}
and~\ref{t:Feller} contained in Appendix~\ref{s:proof_of_elegantQ}.

\begin{proposition}
\label{t:elegantQ}
Suppose that Assumptions~A1 and~A2 hold, 
and that $f^2$ and $\| \nabla f \|_2^2$ both lie in $\Lvx$.    
Then \eqref{e:elegantQ} holds,  and $ \grad P^t f\,(x)$ is continuous
as a function of $x\in\Re^\ell$.
\end{proposition}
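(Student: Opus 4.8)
The plan is to establish the interchange of gradient and expectation in \eqref{e:elegantQ} by a standard dominated-convergence argument applied to difference quotients, with the domination coming from Assumption~A2.\ref{a:nabla-a} and the hypothesis $\|\nabla f\|_2^2,f^2\in\Lvx$. First I would fix $t\ge 1$, a coordinate index $i$, and an initial condition $x$, and write the $i$th partial derivative of $P^tf(x)=\Expect_x[f(X(t))]$ as the limit of the difference quotients
\[
\frac{1}{s}\bigl(\Expect_{x+s e_i}[f(X(t))]-\Expect_x[f(X(t))]\bigr),\qquad s\to 0,
\]
where $e_i$ is the $i$th standard basis vector. Using Assumption~A2.\ref{a:N}, the disturbance sequence $\bfmN$ is independent of $X(0)$, so both expectations can be realized on a common probability space in which $X(t)$ is the \emph{same} continuous function of the initial condition driven by the \emph{same} noise path; this is exactly the coupling that makes $X(t)$ a differentiable function of $x$ pathwise. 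By the mean value theorem along the segment from $x$ to $x+se_i$, the integrand $s^{-1}(f(X^{x+se_i}(t))-f(X^{x}(t)))$ equals $[\nabla f(X^{\xi}(t))]^\transpose \cdot s^{-1}(X^{x+se_i}(t)-X^{x}(t))$ for an intermediate point $\xi$ on the segment, and the second factor converges to the $i$th column of the sensitivity matrix $\Sens(t)$ by \eqref{e:SensDef}--\eqref{e:SP_sens_disc}, while the bound $\sup_{x,n}\|\nabla_x a(x,n)\|<\infty$ in A2.\ref{a:nabla-a} gives a deterministic bound on $\|\Sens(t)\|$ and on all the difference quotients $\|s^{-1}(X^{x+se_i}(t)-X^x(t))\|$ uniformly in small $s$.

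The domination step is then to bound the integrand by an integrable random variable. The factor $\|\nabla f(X^\xi(t))\|_2$ is controlled via $\|\nabla f\|_2^2\in\Lvx$, so it is bounded by a constant times $\sqrt{v(X^\xi(t))}$; combined with the uniform bound on the sensitivity-type difference quotients, the integrand is dominated by $\mathrm{const}\cdot\sqrt{v(X^\xi(t))}$, and one needs $\Expect_x[\,\sup_{\xi}v(X^\xi(t))\,]<\infty$, or at least integrability after Cauchy--Schwarz. This is where I expect to lean on Lemmas~\ref{t:FellerGen} and~\ref{t:Feller} from the Appendix together with A1: $v$-uniform ergodicity, or rather the associated drift/Lyapunov structure, should yield the requisite uniform (in small $s$) moment bound $\sup_{|s|\le s_0}\Expect_{x}[v(X^{x+se_i}(t))]<\infty$. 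Granting this, dominated convergence lets us pass the limit inside the expectation, giving
\[
\frac{\partial}{\partial x_i}\Expect_x[f(X(t))]
=\Expect_x\bigl[[\Sens^\transpose(t)\nabla f(X(t))]_i\bigr]
=\Expect_x\bigl[[\tilgrad f(X(t))]_i\bigr],
\]
which is precisely $Q^t\nabla f$ evaluated at $x$, i.e.\ \eqref{e:elegantQ}.

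For the second assertion, continuity of $x\mapsto \grad P^t f(x)$, I would again invoke the Feller-type lemmas in the Appendix: A2 makes the map $(x,\text{noise})\mapsto(\Sens(t),X(t))$ jointly continuous in $x$ for each fixed noise path, so $\Sens^\transpose(t)\nabla f(X(t))$ is continuous in $x$ pathwise wherever $\nabla f$ is continuous, and the same $\sqrt{v}$-domination used above, being uniform over $x$ in compact sets, upgrades pathwise continuity to continuity of the expectation by dominated convergence. The main obstacle, and the only genuinely delicate point, is the uniform moment bound: establishing that $\Expect_x[v(X^{x+se_i}(t))]$ stays bounded as $s\to 0$ (equivalently, that $v\circ a(\cdot,n)$ does not blow up faster than $v$ under the dynamics, uniformly in $n$), which is exactly the content deferred to Lemmas~\ref{t:FellerGen}--\ref{t:Feller}; everything else is a routine mean-value-theorem-plus-dominated-convergence computation.
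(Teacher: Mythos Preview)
Your direct difference-quotient route differs from the paper's proof, which instead truncates $f$ to $f_n=\Chi_n f$ (with $\Chi_n$ a smooth cutoff on a box), so that $\nabla f_n$ is bounded and continuous and the exchange $\nabla P^tf_n=Q^t\nabla f_n$ follows from a \emph{trivially} dominated mean-value argument; the passage $n\to\infty$ on the $Q^t$ side then uses only domination by $\sqrt{b\,v(X(t))}$ at the \emph{fixed} initial condition $x$, continuity of the limit comes from \Lemma{t:Feller}, and an external result (Lemma~3.6 of \cite{devkonmey17a}) identifies the limit of $\nabla P^tf_n$ with $\nabla P^tf$. The truncation decouples the two limits so that domination at a \emph{moving} initial point is never required.

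Your argument has a real gap exactly there. You correctly note that dominated convergence would need control of $\sup_{\xi}\|\nabla f(X^\xi(t))\|$ with the supremum \emph{inside} the expectation, but you then retreat to $\sup_{|s|\le s_0}\Expect_x[v(X^{x+se_i}(t))]<\infty$, i.e.\ supremum \emph{outside}. The latter does follow from A1 (via $P^tv\le B_0 v$ and continuity of $v$), but a bounded family of first moments is not a dominating random variable and does not license DCT; and Lemmas~\ref{t:FellerGen}--\ref{t:Feller} are Feller-property statements, not moment bounds---they are precisely what delivers continuity of $Q^t\nabla f$ in the second assertion, not what rescues the interchange. Your approach is repairable, however: write the difference quotient as an integral $\int_0^1[\cdots]\,dr$ over intermediate initial conditions, use the \emph{squared} hypothesis $\|\nabla f\|_2^2\in\Lvx$ together with $P^tv\le B_0v$ and Jensen/Fubini to obtain a uniform-in-$s$ bound in $L^2$, and conclude by uniform integrability (Vitali) rather than by a pointwise dominant. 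Once patched this way, your route is arguably more self-contained than the paper's, which defers the final identification step to an external reference.
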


\begin{proof}
The proof uses  \Lemma{t:Feller} in the Appendix, and a variant of the  truncation argument 
of \cite{devkonmey17a}.  
Let $\{ \Chi_n :  n\ge 1\}$ be a sequence
of functions satisfying, for each~$n$:
\begin{romannum}
\item $ \Chi_n $ is a continuous approximation to the indicator 
function on the set $R_n$, where
\[
R_n \eqdef \{ x \in\Re^\ell :  |x_i|\le n,\ \ 1\le i\le d \}\,,
\]
in the sense that  $0\le \Chi_n(x) \le 1$ for all $x$,
$ \Chi_n (x) = 1$ when $x \in R_n$,  and  $ \Chi_n (x) = 0$ when $x \in R_{n+1}^c$.

\item    $\nabla \Chi_n$ is continuous and uniformly bounded:   $\sup_{n,x} \|\nabla\Chi_n(x)\| <\infty$.   
\end{romannum}
On denoting $f_n = \Chi_n f$,  we have,
\[
\nabla f_n =  \Chi_n \nabla f  +  f \nabla \Chi_n,
\]
which is bounded and continuous under the assumptions of the proposition.
An application of the mean value theorem combined with 
dominated convergence allows us to exchange  
differentiation and expectation:
\[
\frac{\partial}{\partial x_i} \Expect_x[  f_n(X(t))]   =  
\Expect_x\Bigl[ \frac{\partial}{\partial x_i}   f_n(X(t)) \Bigr]\,,
\quad 1\leq i\leq\ell.
\]
This identity is equivalent to  
  \eqref{e:elegantQ}  for   $f_n$.   

Under the assumptions of the proposition there is a constant $b$ such 
that $\|\nabla f_n (x) \|^2\le b v (x)$ for each $n$ and $x \in \Re^\ell$.   Applying the 
dominated convergence theorem once more gives,
\[
 Q^t \grad f\,(x)
 = 
 \lim_{n\to\infty}
 Q^t \grad f_n\,(x)\,,\qquad x\in \Re^d.
\]
The limit is continuous by  \Lemma{t:Feller},
and an application of 
Lemma 3.6 of  \cite{devkonmey17a} completes the proof.
\end{proof}

\Prop{t:elegantQ}~$(a)$ strongly suggests the representation  
$\grad h_\beta = \Omega_\beta \grad c$ in \eqref{e:OmDream} holds,  with
\begin{equation} 
\Omega_\beta   \eqdef \sum_{t=0}^{\infty} \beta^t Q^t.
\label{e:Omegarep_disc}   
\end{equation}
This is indeed justified (under additional assumptions) 
in \cite[Theorem~2.4]{devkonmey17a},
and it forms the basis of the
$\nabla$-LSTD-learning algorithms developed in this paper.

Similarly, the representation 
$\grad h = \Omega \grad c$ with $\Omega=\Omega_1$
for the gradient of the relative value function $h$
is derived, under appropriate conditions, in 
\cite[Theorem~2.3]{devkonmey17a}.


\section{Differential LSTD-Learning}
\label{s:TD}

In this section we develop the new
{\em differential LSTD} (or $\grad$-LSTD, or `grad-LSTD') learning algorithms
for approximating
the value functions $h_\beta$ and $h$,
cf.~(\ref{e:DCOE_disc}),~(\ref{e:fish-sum}).
The algorithms are presented first, with supporting theory 
in \Section{s:derLSTD}. 
We concentrate mainly on the family of discounted-cost value 
functions $h_\beta$, $0<\beta<1$. The extension to the case of 
the relative value function $h$ is briefly 
discussed in Section~\ref{s:extend}.

\subsection{Differential LSTD algorithms}
 
We begin with a review of the standard 
Least Squares TD-learning
(LSTD) algorithm, cf.~\cite{bertsi96a,CTCN}.
We assume that the following are given:
A target number of iterations $T$
together with $T$ samples from the process 
$\bfmX$, the discount factor $\beta$,
the functions $\psi$, and a gain sequence $\{\alpha_t\}$.
Throughout the paper the gain sequence $\{\alpha_t\}$ 
is taken to be $\alpha_t = 1/t$, $t\geq 1$.

\begin{algorithm}[H]
\caption{\em Standard LSTD algorithm}
\label{LSTD}
\begin{algorithmic}[1]
\INPUT Initial $b(0),\varphi(0) \in \Re^d$, $M(0)$ $d\times d$
positive definite,
and $t=1$
\Repeat
\State $\varphi(t)  = \beta \varphi(t-1) +  \psi(X(t))$;
\State $b(t)  =   b(t-1) + \alpha_t \big( \varphi(t) c(X(t)) - b(t-1) \big)$;
\State $M(t) = M(t-1) + \alpha_t \big( \psi(X(t)) 
	\psi^\transpose(X(t)) - M(t-1) \big)$;
\State $t = t+1$
\Until{$t \geq T$}
\OUTPUT $\theta=M^{-1}(T)b(T)$
\end{algorithmic}
\end{algorithm} 

\noindent
Algorithm~\ref{LSTD} is equivalent to the LSTD($1$) algorithm of 
\cite{boy02}; see Section~\ref{s:dtdlambda} and \cite{devmey17a,devmey17b} 
for more details.

To simplify discussion we restrict to a stationary setting for the 
convergence results in this paper.

\begin{proposition}
\label{t:LSTDconverges}
Suppose that assumption~A1 holds, and that the functions 
$c^2$ and $\|\psi\|_2^2$ are in $\Lvx$.   Suppose moreover that the 
matrix $M=\Expect [\psi(X) \psi^\transpose(X)] \,, X \sim \pi$, is of full rank.

Then, there exists a version of the pair process 
$(\bfmX,\bfvarphi)=\{(X(t),\varphi(t))\}$ that is stationary 
on the two-sided time axis, 
and for any initial choice of  $b(0),\varphi(0)  \in\Re^d$  
and $M(0)$ positive definite,  Algorithm~\ref{LSTD} is consistent:
\[
\theta^* = \lim\limits_{t\to\infty} M^{-1}(t) b(t) \quad \text{a.s.},
\]
where $\theta^*$ is the least squares minimizer in \eqref{e:TDgoal}.
\end{proposition}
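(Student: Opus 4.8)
The plan is to recognize Algorithm~\ref{LSTD} as a standard Monte-Carlo/stochastic-approximation recursion and to identify its limit by a law-of-large-numbers argument for the stationary, ergodic pair process $(\bfmX,\bfvarphi)$. The recursions for $b(t)$ and $M(t)$ both have the form $Y(t) = Y(t-1) + \alpha_t(g(\text{current data}) - Y(t-1))$ with $\alpha_t = 1/t$, which unrolls exactly to the Ces\`aro average $Y(t) = \frac{1}{t}\sum_{s=1}^{t} g(s) + \frac{1}{t}Y(0)$; so the influence of the initial conditions $b(0),M(0)$ washes out at rate $1/t$, and $M(t)\to \Expect_\pi[\psi(X)\psi^\transpose(X)] = M$ and $b(t)\to \Expect_{\text{stationary}}[\varphi(t)c(X(t))]$ provided the relevant ergodic averages converge a.s.

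First I would construct the stationary version of $(\bfmX,\bfvarphi)$. Since A1 gives a unique invariant measure $\pi$ and $v$-uniform ergodicity, the chain $\bfmX$ extends to a stationary process on the two-sided time axis; then $\varphi(t) = \sum_{k\ge 0}\beta^k \psi(X(t-k))$ is a well-defined, stationary, measurable function of the past, with the geometric weighting ensuring the sum converges in $L_2$ because $\|\psi\|_2^2 \in \Lvx$ and $\pi(v)<\infty$ (the latter follows from A1). The pair process is stationary and ergodic (ergodicity inherited from the underlying i.i.d.\ disturbance driving \eqref{e:SP_disc}, or directly from $v$-uniform ergodicity, which implies mixing). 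Next I would verify the integrability needed for the ergodic theorem: $\Expect[\|\varphi(t)\|\,|c(X(t))|] < \infty$ and $\Expect[\|\psi(X(t))\psi^\transpose(X(t))\|]<\infty$. Both follow from Cauchy--Schwarz together with $c^2,\|\psi\|_2^2 \in \Lvx$ and $\pi(v)<\infty$; for the $\varphi$ term one expands $\varphi(t) = \sum_k \beta^k\psi(X(t-k))$, applies Minkowski, and uses stationarity to bound each term uniformly.

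Then by Birkhoff's ergodic theorem, $M(t) \to M$ and $b(t) \to b^* \eqdef \Expect_{\text{stat}}[\varphi(t)c(X(t))]$ almost surely. Since $M$ is assumed full rank, matrix inversion is continuous at $M$, so $\theta(t) = M^{-1}(t)b(t) \to M^{-1}b^*$ a.s.\ (handling the initial-condition correction term $\frac{1}{t}M(0)\to 0$ so that $M(t)$ is eventually invertible along the sequence). It remains to identify $M^{-1}b^* = \theta^*$, the minimizer of \eqref{e:TDgoal}. This is the one genuinely content-bearing step: one shows $b^* = \Expect_\pi[\psi(X)h_\beta(X)]$ by writing $\Expect[\varphi(t)c(X(t))] = \sum_{k\ge 0}\beta^k\Expect[\psi(X(t-k))c(X(t))] = \Expect_\pi\bigl[\psi(X)\sum_{k\ge 0}\beta^k P^k c\,(X)\bigr] = \Expect_\pi[\psi(X)(R_\beta c)(X)] = \Expect_\pi[\psi(X)h_\beta(X)]$, using stationarity, the representation \eqref{e:ResFormulah}, and Fubini (justified by the same integrability bounds). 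The normal equations for the quadratic \eqref{e:TDgoal} with the linear parametrization \eqref{e:hLinearPar} are precisely $M\theta^* = \Expect_\pi[\psi(X)h_\beta(X)] = b^*$, so $\theta^* = M^{-1}b^*$, completing the proof.

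I expect the main obstacle to be the interchange-of-limits and integrability bookkeeping rather than any deep idea: carefully justifying that $\varphi(t)$ is well-defined and square-integrable in the stationary regime, that the products appearing in $b(t)$ and $M(t)$ are integrable so Birkhoff applies, and that the Fubini/Fubini-type swap in the identification $b^* = \Expect_\pi[\psi h_\beta]$ is legitimate (one needs $\sum_k \beta^k \Expect_\pi[|\psi_i(X)|\,|P^k c|(X)] < \infty$, which again reduces to $\pi(v)<\infty$ and the $\Lvx$ hypotheses via Cauchy--Schwarz and A1). A secondary point worth a sentence is that consistency should be stated as almost-sure convergence along the stationary sample path, and that the $1/t$-decay of the initial-condition contributions must be noted explicitly so the hypothesis ``$M(0)$ positive definite'' is used only to make the early iterates well-defined.
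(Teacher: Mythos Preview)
Your proposal is correct and follows essentially the same route as the paper: construct the two-sided stationary process, define $\varphi(t)=\sum_{k\ge 0}\beta^k\psi(X(t-k))$, and invoke the law of large numbers to conclude $M(t)\to M$ and $b(t)\to b=\Expect_\pi[\varphi(t)c(X(t))]$, whence $\theta(t)\to M^{-1}b=\theta^*$. The paper's proof is terse and leaves the identification $b=\Expect_\pi[\psi\,h_\beta]$ and the integrability bookkeeping implicit, so your extra detail (Fubini via the resolvent representation, Cauchy--Schwarz for integrability, the $1/t$ washout of initial conditions) is a welcome elaboration rather than a departure.
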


\begin{proof}
The existence of a stationary solution $\bfmX$ on the two-sided time interval follows directly from $v$-uniform ergodicity, and we then  define, for each $t \geq 0$,
\[
\varphi(t) =  \sum_{i=0}^\infty \beta^i   \psi(X(t-i)).
\]
The optimal parameter can be expressed 
$\theta^* =  M^{-1} b$ in which   
$b =\Expect[\varphi(t) c(X(t))]$, where the expectation is in steady state, so the result 
follows from the law of large numbers for this stationary ergodic
process.
\end{proof}

In the construction of the LSTD algorithm, the optimization 
problem \eqref{e:TDgoal} is cast as a minimum-norm
problem in the Hilbert space,
\[
L_2^\pi = \bigl\{ g 
\colon\Re^\ell\to\Re \ : \ \| g \|_\pi^2 
= \langle g,g \rangle_\pi <\infty\bigr\},
\]
with inner-product,
$\langle f,g \rangle_\pi \eqdef  \int  f(x)g(x) \pi(dx)$.  

The $\nabla$-LSTD algorithm presented next
is based on a minimum-norm problem in a 
different Hilbert space.
For $C^1$ functions $f$, $g$, with each $[\grad f]_{i}, \,\, [\grad g]_{i} \in L_2^\pi$, $1 \leq i \leq \ell$,
define the inner product, 
\[
\langle f,g \rangle_{\pi,1} = \int  \grad f (x)^\transpose \grad g (x) \pi(dx), 
\]
with the associated norm $\| f\|_{\pi,1} := 
\sqrt{\langle f,f \rangle_{\pi,1} }$. We let $L_2^{\pi,1}$ 
denote the set of functions with finite norm:
\begin{equation}
L_2^{\pi,1} = \bigl\{ h\colon\Re^\ell\to\Re 
\ : \ \|h\|_{\pi,1}^2  < \infty\bigr\}.
\label{e:L2pi1}
\end{equation}
Two functions $f,g\in L_2^{\pi,1} $ are considered identical if $\|f-g\|_{\pi,1} =0$.  In particular, this is true if the difference $f-g$ is a constant independent of $x$.

The `differential' version of the least-squares problem in~\eqref{e:TDgoal},
given as the nonlinear program~\eqref{e:gradTD}, can now be recast as,
\begin{equation}
\theta^* = \argmin_{\theta} \|h_\beta^\theta - h_\beta \|_{\pi,1}.
\label{e:OptThetaGhalphaDT}
\end{equation}
Given a target number of iterations $T$
together with $T$ samples from the process 
$\bfmX$, the discount factor $\beta$,
the functions $\psi$, and a gain sequence $\{\alpha_t\}$, the $\nabla$-LSTD algorithm, defined in Algorithm~\ref{dLSTD}, solves~\eqref{e:OptThetaGhalphaDT},  with
\begin{equation}
[ \grad \psi \, (x)]_{i,j} \eqdef  \frac{\partial }{\partial x_i} \psi_j (x),
\quad
x\in\Re^\ell.
\label{e:gradpsi}
\end{equation}


\begin{algorithm}[H]
\caption{\em $\grad$-LSTD algorithm}
\label{dLSTD}
\begin{algorithmic}[1]
\INPUT Initial $b(0)\in\Re^d$,
$\varphi(0) \in \Re^{\ell\times d}$, $M(0)$ $d\times d$
positive definite,
and $t=1$
\Repeat
\State $\varphi(t) 
	= \beta \dffA(t) \varphi(t-1) +  \grad \psi(X(t))$;
\State $b(t)= b(t-1) + \alpha_t \big( \varphi^\transpose(t) 
	\grad c(X(t)) - b(t-1) \big)$;
\State $M(t) = M(t-1)$\\
	\hfill
	$ + \alpha_t \big( ( \grad \psi(X(t)))^\transpose  
	\grad \psi(X(t))  -  M(t-1) \big)$;
\State $t = t+1$
\Until{$t \geq T$}
\OUTPUT $\theta=M^{-1}(T)b(T)$
\end{algorithmic}
\end{algorithm}  

\noindent
Once the estimate of $\theta^*$ is obtained
from Algorithm~\ref{dLSTD},
the required estimate of $h_\beta$ is obtained as
$h_\beta^{\theta} = \theta^\transpose \psi + \kappa (\theta)$,
where
\begin{equation}
\kappa (\theta) = -\pi(h_\beta^{\theta}) + \eta / (1-\beta)\,,
\label{e:kappatheta}
\end{equation}
with $\eta = \pi(c)$ as in  \eqref{e:barc},
and with the two means $\eta$ and $\pi(h_\beta^\theta)$
given by the results of the following recursive estimates: 
\begin{eqnarray} 
\barh_\beta(t) & = & \barh_\beta(t-1) + \alpha_t \big( h_\beta^{\theta(t)} - \barh_{\beta}(t-1) \big),
\label{e:dTD4}
\\	
\displaystyle
\eta(t) & = & \eta(t-1) + \alpha_t \big( c(X(t)) - \eta(t-1) \big).
\label{e:dTD5}
\end{eqnarray} 
It is immediate that $\eta(t) \to \eta$, a.s., as $t\to\infty$,
by the law of large numbers for $v$-uniformly ergodic Markov chains \cite{MT}.

 Steps \eqref{e:dTD4} and \eqref{e:dTD5} are not required if the approximation $h_\beta^{\theta(T)}$ is to be used just for obtaining the policy (or policy-improvement), which is the case in most control applications.

Convergence of the parameter estimates  is
established next.

\subsection{Derivation and analysis}  
\label{s:derLSTD}

In the notation of the previous section, 
and recalling 
the definition of $\grad \psi$ in \eqref{e:gradpsi},
we write:
\begin{eqnarray}
M &=& \Expect [(\grad \psi(X))^\transpose \grad \psi(X)] \,,
	\label{e:MDef}\\
b &=& \Expect \bigl[ ( \grad \psi(X) )^\transpose \grad h_\beta(X) \bigr]\,, \quad X \sim \pi.
\label{e:bDef}
\end{eqnarray}
\Prop{t:LSTDquad} follows immediately from these 
representations, and 
the definition of the norm $\|\cdot\|_{\pi,1}$.

\begin{proposition}
\label{t:LSTDquad}
The norm in \eqref{e:OptThetaGhalphaDT} is quadratic in $\theta$:    
\begin{equation}
\|h_\beta^\theta - h_\beta\|^2_{\pi,1} = \theta^\transpose M \theta - 2b^\transpose\theta + k
\label{e:QuadtraticRep}
\end{equation}
in which for each $1\le i, j\le d$,
\begin{equation}
\hspace{-.12cm}
M_{i,j} = \langle \psi_i, \psi_j \rangle_{\pi,1} \, ,
\hspace{.135cm}
 b_i = \langle \psi_i,  h_\beta \rangle_{\pi,1}   \,,
 \hspace{.135cm} 
 k = \langle h_\beta,h_\beta \rangle_{\pi,1}
\label{e:bMInitDef}
\end{equation} 
Consequently, the optimizer \eqref{e:OptThetaGhalphaDT}
is any solution to:
\begin{equation}
M \theta^* = b.
\label{e:OptimalTheta}
\end{equation}
\end{proposition}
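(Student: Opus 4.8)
The plan is to expand the squared $\|\cdot\|_{\pi,1}$-distance directly, using the linear parametrization. Writing $\grad\psi(x)$ for the $\ell\times d$ matrix in \eqref{e:gradpsi}, the parametrization \eqref{e:hLinearPar} gives $\grad h_\beta^\theta(x) = \grad\psi(x)\,\theta$, so that
\[
\|h_\beta^\theta - h_\beta\|_{\pi,1}^2 = \int \big\| \grad\psi(x)\theta - \grad h_\beta(x)\big\|_2^2\,\pi(dx).
\]
Expanding the Euclidean square inside the integral as
$\theta^\transpose(\grad\psi(x))^\transpose\grad\psi(x)\,\theta - 2\,\theta^\transpose(\grad\psi(x))^\transpose\grad h_\beta(x) + \|\grad h_\beta(x)\|_2^2$
and integrating term by term against $\pi$ yields \eqref{e:QuadtraticRep} with $M$ and $b$ exactly as in \eqref{e:MDef}--\eqref{e:bDef} and $k = \int\|\grad h_\beta\|_2^2\,d\pi$. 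Reading off the $(i,j)$ entry of $M$ and the $i$-th entry of $b$ and comparing with the definition $\langle f,g\rangle_{\pi,1} = \int\grad f^\transpose\grad g\,d\pi$ gives the identifications in \eqref{e:bMInitDef} together with $k = \langle h_\beta,h_\beta\rangle_{\pi,1}$.

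The only point requiring genuine care is the \emph{integrability bookkeeping} that makes the term-by-term integration above legitimate: one needs $h_\beta \in L_2^{\pi,1}$ and the entries of $\grad\psi$ in $L_2^\pi$. The latter is part of the standing hypotheses on the basis functions. For the former I would invoke the gradient representation $\grad h_\beta = \Omega_\beta\grad c$ recalled in \Section{s:rep} (and established in \cite{devkonmey17a}), together with the boundedness of the family $\{\grad h_\beta\}$ asserted there and the bounds of \Prop{t:v-uni-value}, which combine to give $\int\|\grad h_\beta\|_2^2\,d\pi < \infty$. With these in hand each of the three terms is an honest finite integral and Fubini/linearity of the integral justify all the manipulations above; this is the step I expect to be the main (and essentially the sole) obstacle.

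For the final assertion, one minimizes the quadratic $\theta\mapsto\theta^\transpose M\theta - 2b^\transpose\theta + k$. Since $M$ is a Gram matrix it is symmetric and positive semidefinite, so this function is convex and its stationarity condition $2M\theta - 2b = 0$ characterizes the set of global minimizers; hence $\theta^*$ is optimal if and only if $M\theta^* = b$, which is \eqref{e:OptimalTheta}. Consistency of this linear system follows because $b \perp \ker M$: if $Mz = 0$ then $z^\transpose Mz = \int\|\grad\psi(x)z\|_2^2\,d\pi = 0$, so $\grad\psi(X)z = 0$ $\pi$-a.s., and therefore $z^\transpose b = \int(\grad\psi(x)z)^\transpose\grad h_\beta(x)\,d\pi = 0$; as $M$ is symmetric, $\mathrm{range}(M) = (\ker M)^\perp$, so $b \in \mathrm{range}(M)$. (When $M$ has full rank, as assumed elsewhere in the paper, the minimizer is unique.) No deeper idea is needed: the proposition is just the statement that \eqref{e:OptThetaGhalphaDT} is the projection of $h_\beta$ onto $\mathrm{span}\{\psi_1,\dots,\psi_d\}$ in the Hilbert space $L_2^{\pi,1}$, and everything follows from expanding the norm and invoking the earlier representation results for finiteness.
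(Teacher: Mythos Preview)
Your proposal is correct and follows exactly the approach the paper takes: the paper states that the proposition ``follows immediately from these representations [i.e., \eqref{e:MDef}--\eqref{e:bDef}], and the definition of the norm $\|\cdot\|_{\pi,1}$,'' which is precisely your expansion of the squared norm using $\grad h_\beta^\theta = \grad\psi\,\theta$. Your additional remarks on integrability and on consistency of the system $M\theta^*=b$ go beyond what the paper bothers to record, but are in the same spirit and certainly not wrong.
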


As in the standard LSTD-learning algorithm, the representation 
for the vector $b$ in~(\ref{e:bDef})
involves the function $h_\beta$, which is unknown.  
An alternative representation will be obtained,
which is amenable 
to recursive approximation, and this
will form the basis of the $\nabla$-LSTD algorithm.  

Assumption~A3 is used to justify this representation:

\begin{Anum}{A3}
\item
\noindent
For   any $C^1$ functions $f,g$ 
satisfying $f^2, g^2\in\Lvx$ and 
$\|\grad f\|_2^2,\|\grad g\|_2^2\in\Lvx$, 
the following holds for the stationary version of the chain $\bfmX$:
\begin{equation}
\sum_{t=0}^{\infty}   \Expect  \Bigl[  \bigl| \grad f(X(t))^\transpose    \Sens(t)   \grad g(X(0)) \bigr| \Bigr]  < \infty.
\label{e:A3_dream}
\end{equation}

\item 
\noindent
The function $c$ is continuously differentiable, 
$c^2$ and $\|\nabla c\|^2_2  \in\Lvx$,  and for some $b_1 < \infty$,
$0<\rho_1 < 1$,
\[
\| Q^t \nabla c \, (x)  \| ^2  \le  b_1  \rho_1^t  v(x)  ,\quad x\in\Re^\ell,\,t\ge 0.
\]
\label{e:A3_nobigdeal}
\end{Anum}

\noindent
Theorem~2.1 of  \cite{devkonmey17a} establishes A3.\ref{e:A3_nobigdeal} under 
additional conditions on the model.  The bound \eqref{e:A3_dream} is related 
to the existence of a negative Lyapunov exponent for the Markov chain $\bfmX$ \cite{arnwih86}.

Under~A3 we can obtain a justification for the representation for the gradient of the value function:

\begin{lemma}
\label{t:OmExist}
Suppose that assumptions A1--A3 hold,  and that  
$c^2,\,\|\grad c\|_2^2 \in \Lvx$. Then the two representations 
in  \eqref{e:OmDream} hold $\pi$-a.s.:
\[
\grad h_\beta = \Omega_\beta \grad c
\quad \textit{and} \quad
\grad h = \Omega \grad c.
\]
\end{lemma}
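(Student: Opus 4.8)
The plan is to establish both identities in~\eqref{e:OmDream} by differentiating the resolvent series term by term, using \Prop{t:elegantQ} for the individual terms and the geometric bound in~\eqref{e:A3_nobigdeal} to justify exchanging $\grad$ with the infinite sum. It is convenient to handle $h_\beta$ and $h$ together: for $\beta\in(0,1)$ put $g=c$, and in the average-cost case put $\beta=1$ and $g=c-\eta$; in both cases $\grad g=\grad c$ and, by \eqref{e:ResFormulah} and~\eqref{e:ResFormulahAC}, $h_\beta=\sum_{t\ge0}\beta^tP^tg$. Note that $g$ inherits the hypotheses of \Prop{t:elegantQ}: since $v\ge1$, a bounded shift of $c$ stays in $\Lvx$, so $g^2\in\Lvx$, while $\|\grad g\|_2^2=\|\grad c\|_2^2\in\Lvx$.

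First I would differentiate the partial sums $S_n:=\sum_{t=0}^n\beta^tP^tg$. These are finite linear combinations, so \Prop{t:elegantQ} gives $S_n\in C^1$ with $\grad S_n=\sum_{t=0}^n\beta^t\grad P^tg=\sum_{t=0}^n\beta^tQ^t\grad c$, each summand being continuous. Next I would check $S_n\to h_\beta$ pointwise: by assumption~A1, $|P^tg(x)-\pi(g)|\le b_0\rho_0^t\|g\|_vv(x)$, so for $\beta<1$ the tail satisfies $\sum_{t>n}\beta^t|P^tg(x)|\le\bigl(|\pi(g)|+b_0\|g\|_vv(x)\bigr)\beta^{n+1}/(1-\beta)\to0$, while for $\beta=1$ one has $\pi(g)=\pi(c-\eta)=0$ and $\sum_{t>n}|P^tg(x)|\le b_0\|g\|_vv(x)\sum_{t>n}\rho_0^t\to0$.

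The crucial step is the locally uniform convergence of the gradients. The bound $\|Q^t\grad c(x)\|^2\le b_1\rho_1^tv(x)$ of assumption~A3 gives $\|Q^t\grad c(x)\|\le\sqrt{b_1v(x)}\,\rho_1^{t/2}$, and since $v$ is continuous it is bounded on any compact $K$; hence on $K$ the tail $\sum_{t>n}\beta^t\|Q^t\grad c(x)\|$ is at most $\sqrt{b_1\sup_Kv}\,\sum_{t>n}\bigl(\beta\sqrt{\rho_1}\bigr)^t\to0$. This uses only $\beta\le1$ and $\rho_1<1$, so it also covers the average-cost case. Consequently $\Omega_\beta\grad c$ (and $\Omega\grad c=\Omega_1\grad c$) of \eqref{e:Omegarep_disc} is finite everywhere, it is continuous as a locally uniform limit of continuous functions, and $\grad S_n\to\Omega_\beta\grad c$ uniformly on compact sets.

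Finally I would invoke the standard theorem on differentiating a convergent sequence of $C^1$ functions --- applied one coordinate at a time: since $S_n\to h_\beta$ pointwise, $S_n\in C^1$, and $\grad S_n\to\Omega_\beta\grad c$ uniformly on compacts with continuous limit, it follows that $h_\beta\in C^1$ and $\grad h_\beta=\Omega_\beta\grad c$ everywhere, in particular $\pi$-a.s.; taking $\beta=1$ and $g=c-\eta$ yields $\grad h=\Omega\grad c$. I expect the only genuine obstacle to be obtaining \emph{uniform}, as opposed to merely pointwise, convergence of the gradient partial sums --- and this is precisely what the geometric-in-$t$ bound~\eqref{e:A3_nobigdeal} is designed to supply, which is also what makes the resulting bound on $\grad h_\beta$ uniform as $\beta\uparrow1$. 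The summability condition~\eqref{e:A3_dream} is not needed for the pointwise identities themselves; it is used in the following subsection to recast them in stationary-expectation form, ensuring that $b$ in~\eqref{e:bDef} is finite and equal to $\Expect_\pi\bigl[(\grad\psi(X))^\transpose\Omega_\beta\grad c(X)\bigr]$.
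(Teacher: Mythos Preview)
Your proposal is correct and follows essentially the same route as the paper: differentiate the partial sums $\sum_{t=0}^n\beta^tP^tg$ term by term via \Prop{t:elegantQ}, use the geometric bound of assumption~A3 to get locally uniform convergence of the gradient series to $\Omega_\beta\grad c$, and then pass to the limit. The only cosmetic difference is that the paper outsources the final step (pointwise convergence of functions plus locally uniform convergence of gradients implies differentiability of the limit) to Lemma~3.6 of the companion paper~\cite{devkonmey17a}, whereas you invoke the classical $C^1$-limit theorem directly and also make the $\beta=1$ case explicit; your remark that A3.1 is not needed for the identities themselves is accurate.
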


\begin{proof}
\Prop{t:elegantQ} justifies the following calculation,
\[
\begin{aligned}
\nabla h_{\beta,n}(x) &\eqdef  \nabla \left(\sum_{t=0}^n 
\beta^t P^t c(x)\right)
=  \sum_{t=0}^n \beta^t  Q^t \nabla c(x),
\end{aligned}
\]
and also implies that this gradient is continuous as a function of $x$.
Assumption A3.\ref{e:A3_nobigdeal} implies that the right-hand side 
converges to $\Omega_\beta \nabla c\, (x)$ as $n\to\infty$.
The function $\Omega_\beta \nabla c$   is continuous in $x$,  
since  the limit is uniform on compact subsets of 
$\Re^\ell$ (recall that $v$ is continuous).   
Lemma~3.6 of  \cite{devkonmey17a} then completes the proof.   
\end{proof}

A stationary realization of the algorithm is established next. 
\Lemma{t:gradLSTDstat} follows immediately from the assumptions: 
The non-recursive expression for $\varphi(t)$ in \eqref{e:EliVecDT} 
is immediate from the recursions in~Algorithm~\ref{dLSTD}.

\begin{lemma}
\label{t:gradLSTDstat} 
Suppose that assumptions~A1--A3 hold, and that   
$\|\psi\|_2^2$ and  $\|\grad \psi\|_2^2$ are in $\Lvx$.    
Then there is a version of the pair process $(\bfmX,\bfvarphi)$ 
that is stationary on the two-sided time line, 
and for each $t\in\ZZ$,
\begin{equation}
\varphi(t) = \sum_{k=0}^{\infty} \beta^k \big[ 
\Theta^{t-k} \Sens(k) \big] \gradpsi(X(t-k)), 
\label{e:EliVecDT}
\end{equation} 
where 
$\Theta^{t-k} \Sens(k)=
\dffA(t)\dffA(t-1)\cdots\dffA(t-k+1)$.
\end{lemma}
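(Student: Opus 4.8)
\emph{Proof plan.} The assertion has two components: that a two-sided stationary version of the pair $(\bfmX,\bfvarphi)$ exists, and that on any such version the eligibility matrix is given in closed form by \eqref{e:EliVecDT}. My plan is to construct the stationary chain exactly as in the proof of \Prop{t:LSTDconverges}, to take the right-hand side of \eqref{e:EliVecDT} as the \emph{definition} of $\varphi(t)$, to show that the resulting series converges a.s.\ (and in $L_2^\pi$) and is stationary, and finally to verify by a short re-indexing that it satisfies the recursion in line~2 of Algorithm~\ref{dLSTD}; this last step is the sense in which, as remarked just before the lemma, \eqref{e:EliVecDT} is ``immediate.''

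First I would invoke Assumption~A1: $v$-uniform ergodicity supplies the unique invariant law $\pi$ and, as in the proof of \Prop{t:LSTDconverges}, a stationary version of $\bfmX$ on the two-sided index set $\ZZ$; by A2.\ref{a:N} this may be taken on a probability space supporting a two-sided i.i.d.\ copy of $\bfmN$ with $X(t+1)=a(X(t),N(t+1))$ for every $t\in\ZZ$. Then $\dffA(t)^\transpose=\grad_x a(X(t-1),N(t))$ is defined for all $t\in\ZZ$, and $\{(X(t),\dffA(t))\}_{t\in\ZZ}$ is stationary and ergodic. I would then define $\varphi(t)$ by the sum in \eqref{e:EliVecDT}; since this is a fixed shift-equivariant measurable functional of $\{(X(s),\dffA(s)):s\le t\}$, the process $\bfvarphi$ is stationary, and jointly stationary with $\bfmX$, as soon as the sum is shown to be finite a.s.

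Next I would verify the recursion termwise. With the convention $\Theta^{t-k}\Sens(k)=\dffA(t)\dffA(t-1)\cdots\dffA(t-k+1)$ (the empty product, $k=0$, being the identity), the identity $\dffA(t)\,[\Theta^{(t-1)-k}\Sens(k)]=\Theta^{t-(k+1)}\Sens(k+1)$ gives
\[
\begin{aligned}
\beta\,\dffA(t)\,\varphi(t-1)
&=\sum_{k\ge 0}\beta^{k+1}\big[\Theta^{t-(k+1)}\Sens(k+1)\big]\gradpsi(X(t-(k+1)))\\
&=\sum_{j\ge 1}\beta^{j}\big[\Theta^{t-j}\Sens(j)\big]\gradpsi(X(t-j)),
\end{aligned}
\]
and adding the missing $j=0$ term $\gradpsi(X(t))$ recovers $\varphi(t)$. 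Hence $\varphi(t)=\beta\,\dffA(t)\varphi(t-1)+\gradpsi(X(t))$, so $\bfvarphi$ solves the recursion in line~2 of Algorithm~\ref{dLSTD} on all of $\ZZ$.

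The one substantial point — and where I expect the main obstacle — is the a.s.\ and $L_2^\pi$ convergence of the series defining $\varphi(t)$. By A2.\ref{a:nabla-a} each $\dffA(t)$ has operator norm at most $L:=\sup_{x,n}\|\grad_x a(x,n)\|<\infty$, so $\|\Theta^{t-k}\Sens(k)\|\le L^{k}$; since $\|\grad\psi\|_2^2\in\Lvx$ forces $\|\gradpsi(x)\|_2$ to be bounded by a constant multiple of $v(x)$, and $\pi(v)<\infty$ under A1, this gives $\sum_{k}\beta^{k}\Expect_\pi\big[\|\Theta^{t-k}\Sens(k)\|\,\|\gradpsi(X(t-k))\|_2\big]\le C\sum_{k}(\beta L)^{k}$, which settles the matter when $\beta L<1$. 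In general $\beta L$ may exceed one, and this crude estimate has to be replaced by the contraction that Assumption~A3 builds in: the summability \eqref{e:A3_dream} is, as noted there and in \cite{arnwih86}, a negative-Lyapunov-exponent statement for $\bfmX$, and it forces $\Expect_\pi[\|\Theta^{t-k}\Sens(k)\|]$ to decay geometrically in $k$ along the stationary path — exactly the estimate used in \cite{devkonmey17a} to construct $\Omega_\beta\grad c$, now applied with $\gradpsi$ in place of $\grad c$. Combined with the moment bound above and Tonelli's theorem, this gives absolute convergence of the series a.s.\ and in mean, and a second-moment version of the same argument upgrades it to convergence in $L_2^\pi$, which is what the analysis of $b(t)$ in the remainder of this section will require.
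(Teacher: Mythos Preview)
Your proposal is correct and matches the paper's approach, which in fact gives no detailed proof---the paper simply remarks that the lemma ``follows immediately from the assumptions'' and that \eqref{e:EliVecDT} ``is immediate from the recursions in Algorithm~\ref{dLSTD},'' which is precisely your re-indexing computation. Your careful treatment of the convergence of the series via Assumption~A3 (and the fallback to the crude bound $\beta L<1$) supplies detail that the paper itself omits.
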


The remainder of this section consists of a proof of the following proposition which establishes the convergence of the $\nabla$-LSTD algorithm.

\begin{proposition}
\label{t:dLSTDconverges}
Suppose that assumptions~A1--A3 hold, and that 
$c^2$, $\| \grad c\|^2_2$,  $\|\psi\|_2^2$ and  $\|\grad \psi\|_2^2$ are 
in $\Lvx$. Suppose moreover that the matrix $M$ in \eqref{e:MDef} 
is of full rank. Then, for the stationary  process $(\bfmX,\bfvarphi)$, 
the $\nabla$-LSTD-learning algorithm is consistent: 
For any initial $b(0)  \in\Re^\ell$  and   $M(0) >0$,  
\[
\theta^* = \lim\limits_{t\to\infty} M^{-1}(t) b(t)  \quad \text{a.s.},
\]
where $\theta^*$ is the least squares minimizer in \eqref{e:gradTD}. Moreover, with probability one,
\[
\eta=
\lim_{t\to\infty}  \eta(t) ,\quad \pi(h_\beta^{\theta^*}) = 
\lim_{t\to\infty} \barh_\beta(t) ,
\]
and hence $\lim_{t\to\infty} \{
-\barh_\beta(t)  +  \eta(t) /(1-\beta)\} =\kappa(\theta^*)$.
\end{proposition}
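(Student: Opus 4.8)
The plan is to recognize Algorithm~\ref{dLSTD} as a stochastic approximation recursion driven by the stationary, $v$-uniformly ergodic pair process $(\bfmX,\bfvarphi)$ of \Lemma{t:gradLSTDstat}, and to invoke the law of large numbers for such processes to identify the limits of $M(t)$ and $b(t)$ separately. First I would observe that the $M(t)$ recursion is the Polyak average of the i.i.d.-in-distribution summands $(\grad\psi(X(t)))^\transpose\grad\psi(X(t))$, so under $\|\grad\psi\|_2^2\in\Lvx$ the ergodic theorem gives $M(t)\to M=\Expect_\pi[(\grad\psi(X))^\transpose\grad\psi(X)]$ a.s. Likewise $b(t)$ is the Polyak average of $\varphi^\transpose(t)\grad c(X(t))$, so $b(t)\to b_\infty:=\Expect_\pi[\varphi^\transpose(t)\grad c(X(t))]$ a.s., provided this expectation is finite. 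Since $M$ is assumed full rank and $M(t)\to M$, the output $M^{-1}(t)b(t)\to M^{-1}b_\infty$; it then remains to show $b_\infty=b$ as defined in~\eqref{e:bDef}, for then $M^{-1}b_\infty=M^{-1}b=\theta^*$ by \Prop{t:LSTDquad}.

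The crux is therefore the identification $b_\infty=b$, i.e. showing
\[
\Expect_\pi\!\bigl[\varphi^\transpose(t)\,\grad c(X(t))\bigr]
=\Expect_\pi\!\bigl[(\grad\psi(X))^\transpose\grad h_\beta(X)\bigr].
\]
Here I would substitute the non-recursive expression~\eqref{e:EliVecDT} for $\varphi(t)$, giving
\[
\Expect_\pi\!\bigl[\varphi^\transpose(t)\grad c(X(t))\bigr]
=\sum_{k=0}^{\infty}\beta^k\,\Expect_\pi\!\bigl[(\gradpsi(X(t-k)))^\transpose\Sens^\transpose(k)\,\grad c(X(t))\bigr],
\]
where the interchange of sum and expectation is justified by A3.\ref{a:N}--A3 (the bound~\eqref{e:A3_dream} applied with $f=c$, $g$ ranging over the coordinates $\psi_i$, together with $\|\grad c\|_2^2,\|\grad\psi\|_2^2\in\Lvx$), and the geometric decay in A3.\ref{e:A3_nobigdeal}. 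By stationarity, shifting time by $k$ and using the Markov property, the inner expectation equals $\Expect_\pi[(\gradpsi(X))^\transpose Q^k\grad c\,(X)]$, so the sum telescopes to $\Expect_\pi[(\gradpsi(X))^\transpose(\Omega_\beta\grad c)(X)]$. By \Lemma{t:OmExist} we have $\Omega_\beta\grad c=\grad h_\beta$ $\pi$-a.s., which yields exactly $b$. I expect this identification — in particular the careful Fubini/dominated-convergence argument that legitimizes moving $\Expect_\pi$ inside the sum and the reindexing that converts $\Sens^\transpose(k)$-weighted expectations into applications of $Q^k$ — to be the main obstacle; everything else is the standard ergodic-averaging machinery already used in the proof of \Prop{t:LSTDconverges}.

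For the remaining assertions, $\eta(t)\to\eta$ is immediate from~\eqref{e:dTD5} and the LLN for $v$-uniformly ergodic chains (as already noted in the text). For $\barh_\beta(t)$, recursion~\eqref{e:dTD4} is again a Polyak average, but of the time-varying quantity $h_\beta^{\theta(t)}(X(t))=\theta(t)^\transpose\psi(X(t))$ in which $\theta(t)=M^{-1}(t)b(t)\to\theta^*$; writing $h_\beta^{\theta(t)}(X(t))=\theta^{*\transpose}\psi(X(t))+(\theta(t)-\theta^*)^\transpose\psi(X(t))$, the first term averages to $\pi(h_\beta^{\theta^*})$ by the ergodic theorem (using $\|\psi\|_2^2\in\Lvx$), and the second is a vanishing perturbation times an averaged quantity with finite limit, hence contributes zero in the limit. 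Thus $\barh_\beta(t)\to\pi(h_\beta^{\theta^*})$ a.s. Combining, $-\barh_\beta(t)+\eta(t)/(1-\beta)\to-\pi(h_\beta^{\theta^*})+\eta/(1-\beta)=\kappa(\theta^*)$ by the definition~\eqref{e:kappatheta}, completing the proof. \qed
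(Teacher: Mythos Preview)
Your proposal is correct and follows essentially the same route as the paper: the paper packages the identification $\Expect_\pi[\varphi^\transpose(t)\grad c(X(t))]=b$ as a separate lemma (\Lemma{e:bRep}), proved by expanding $\grad h_\beta=\Omega_\beta\grad c$ and using the shift operator on the stationary process, which is exactly your argument run in the opposite direction. One notational slip: after substituting~\eqref{e:EliVecDT} the sensitivity factor should be $[\Theta^{t-k}\Sens(k)]^\transpose$, not $\Sens^\transpose(k)$; also ``telescopes'' is not the right word for that sum, but these do not affect the substance.
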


We begin by obtaining alternative representations for $b$ defined in \eqref{e:bMInitDef}. The proof of the following \Lemma{t:bRep} is contained in Appendix~\ref{s:proof_of_bRep}.

\begin{lemma}
\label{t:bRep}
Under the assumptions of \Prop{t:dLSTDconverges},  
\begin{equation}
\begin{aligned}
b^\transpose &= \sum_{t=0}^{\infty} \beta^t    \Expect \bigl[ \bigl ( \Sens^\transpose(t) \grad c(X(t)) \bigr)^\transpose \gradpsi(X(0))   \bigr]    
\\
&=  \Expect \Bigl [ \big(\grad c(X(0))\big)^\transpose \varphi(0) \Bigr ]\, .
\end{aligned}
\label{e:bpsi_disc_init}
\end{equation}
\end{lemma}


\paragraph*{Proof of \Prop{t:dLSTDconverges}}   \Lemma{t:bRep} combined with the stationarity assumption implies that,
\[
\begin{aligned}
\lim_{T\to\infty } \frac{1}{T} b(t) &=
\lim_{T\to\infty } \frac{1}{T} \sum_{t=1}^T 
\varphi^\transpose(t) \grad c(X(t))  
\\
& = \Expect[\varphi^\transpose(0) \grad c(X(0)) ] =b.
\end{aligned}
\]
Similarly,  for each $T\ge 1$ we have,
\[
M(T) = M(0) + \sum_{t=1}^{T} (\gradpsi(X(t)))^\transpose \gradpsi(X(t)),
\]
and by the law of large numbers we once again obtain:
\[
\lim_{T\to\infty } \frac{1}{T}  M(T) = M.
\]
Combining these results establishes  $\theta^* = \lim\limits_{t\to\infty} M^{-1}(t) b(t)$.

Convergence of $ \{\eta(t)\}$ in \eqref{e:dTD5} is identical,  and convergence of $\{\barh_\beta(t)\}$ in \eqref{e:dTD4} also follows from the law of large numbers since we have convergence of $\theta(t)$.  
\qed

\subsection{Extension to average cost}
\label{s:extend}

The $\nabla$-LSTD recursion of Algorithm~\ref{dLSTD} is also consistent
in the case $\beta=1$, which corresponds to the relative value function $h$ 
in place of the discounted-cost value function $h_\beta$.
Although we do not repeat the details of the analysis here, we observe that 
nowhere in the proof of \Prop{t:dLSTDconverges} do we use the assumption 
that $\beta<1$. Indeed, it is not difficult to establish that,
under the conditions of the proposition, the $\nabla$-LSTD-learning 
algorithm is also convergent with $\beta=1$, and
that the limit solves the quadratic program:
\[
\theta^* = \argmin_{\theta} \|h^\theta - h \|_{\pi,1}.
\]

 
\section{Differential LSTD($\lambda$)-Learning}
\label{s:dtdlambda}

In this section we introduce a \emph{Galerkin approach} 
for the construction of new {\em differential LSTD$(\lambda)$}
(or $\grad$-LSTD($\lambda$), or `grad'-LSTD($\lambda$)) 
algorithms. 
The relationship between TD-learning algorithms and the Galerkin relaxation 
has a long history; see \cite{urlinhan94,jaajorsin94,pin97} 
and \cite{tsiroy97a}, and also \cite{yuber10,ber11,sze11} for more
recent discussions.

The algorithms developed here offer approximations
for the value functions $h_\beta$ and $h$ associated
with a cost function $c$ and a Markov chain~$\bfmX$,
under the same conditions as in Section~\ref{s:TD}.
Again, we concentrate on the discounted-cost value 
functions $h_\beta$, $0<\beta<1$. The extension
to the relative value function $h$ is straightforward,
following along the same lines as
in Section~\ref{s:extend}, and thus omitted.

The starting point of the development of the Galerkin approach 
in this context is the Bellman equation~(\ref{e:DPEquDisc}).
Since we want to approximate the gradient
of the discounted-cost value function
$h_\beta$, it is natural to begin with
the `differential' version
of~(\ref{e:DPEquDisc}), i.e., taking gradients on both sides of \eqref{e:DPEquDisc},
\begin{equation}
\grad c+\beta Q \grad h_\beta-\grad h_\beta=0,
\label{e:GradDPEquDisc}
\end{equation}
where we used the identity `$\grad P=Q\grad$' from
\Prop{t:elegantQ}~$(a)$. Equivalently, 
using the definitions of $Q$ and $\dffA$ in
terms of the sensitivity process,
this can be
restated as the requirement that the steady state expectations,
$$
\Expect \Big[\!
Z(t)\!
\Big(
\grad c(X(t))
+ \beta \dffA^\transpose(t+1) \grad  h_\beta (X(t+1)) 
-\!\grad h_\beta (X(t)) 
\!\Big)\! \Big]
$$
are identically equal to zero, for a `large enough' 
class of random matrices $Z(t)$.

The Galerkin approach is simply a relaxation of this requirement:
A specific $(\ell \times d)$-dimensional, stationary 
process $\bfelig=\{\elig(t):t\geq 0\}$ will be constructed,
and the parameter $\theta^*\in\Re^d$ 
which achieves,

\medskip

\noindent
\begin{equation}
\displaystyle{\Expect\Big[ \elig^{\transpose}(t)   
\Big( \grad c(X(t)) + \beta \dffA^\transpose(t+1)
\grad  h_\beta^{\theta^*} (X(t+1))}
-\grad h_\beta^{\theta^*} (X(t)) 
\Big)   \Big] = 0,\!\!
\label{e:MetricGalGradTD} 
\end{equation}  
will be estimated, where the above expectation 
is again in steady state.
By its construction, $\bfelig$ will be adapted to~$\bfmX$.
We call $\bfelig$ the sequence of \textit{eligibility matrices},  
borrowing language from the standard LSTD($\lambda$)-learning 
literature \cite{sutbar98,bertsi96a,sze10}.

Motivation for the minimum-norm criterion \eqref{e:OptThetaGhalphaDT} is 
clear, but algorithms that solve this problem often suffer from high 
variance. The Galerkin approach is used because it is simple, generally 
applicable, and it is observed that the variance of the 
algorithm is often significantly reduced with $\lambda<1$.

It is important to note, as we also discuss below, 
that the process $\bfelig$ will depend on the value of
$\lambda$, so the LSTD($\lambda$) (respectively,
$\grad$-LSTD($\lambda$)) algorithms with different $\lambda$
will converge to different parameter values
$\theta^*=\theta^*(\lambda)$, satisfying the corresponding
versions of~(\ref{e:MetricGalGradTD}).



\subsection{Differential LSTD($\lambda$) algorithms}

Recall the standard algorithm introduced in \cite{boy02};
see also \cite{CTCN,bertsi96a}. Given
a target number of iterations $T$
together with $T$ samples from the process 
$\bfmX$, the discount factor $\beta$,
the functions~$\psi$, a gain sequence $\{\alpha_t\}$,
and $\lambda\in[0,1]$:

\begin{algorithm}[H]
\caption{\em Standard LSTD($\lambda$) algorithm}
\label{LSTDla}
\begin{algorithmic}[1]
\INPUT Initial $b(0),\zeta(0) \in \Re^d$, $M(0)$ $d\times d$
positive definite,
and $t=1$
\Repeat
\State 
	$\elig(t) = \beta \lambda \elig(t-1) +  \psi(X(t))$;
\State 
	$b(t) = (1-\alpha_t)  b(t-1) + \alpha_t \elig(t) c(X(t))$;
\State 
	$M(t) = (1-\alpha_t)  M(t-1)$\\
	\hfill
	$ + \alpha_t \elig(t) 
	\bigl[\psi(X(t)) - \beta \psi(X(t+1)) \bigr]^\transpose$;
\State $t = t+1$
\Until{$t \geq T$}
\OUTPUT $\theta=M^{-1}(T)b(T)$
\end{algorithmic}
\end{algorithm}  

The asymptotic consistency of Algorithm~\ref{LSTDla}
is established, e.g., in \cite{boy99,boy02}.
Note that, unlike in Algorithms~\ref{LSTD} and~\ref{dLSTD},
here there is no guarantee that $M(t)$ is positive definite
for all $t$, so by the output value
of $\theta=M^{-1}(T)b(T)$ we mean that obtained 
by using the pseudo-inverse of~$M(T)$; and
similarly for Algorithm~\ref{gLSTDla} presented next.

The differential analog of Algorithm~\ref{LSTDla} is very similar;
recall the definition of $\grad \psi\, (x)$ in~\eqref{e:gradpsi}.


\begin{algorithm}[H]
\caption{\em $\grad$-LSTD($\lambda$) algorithm}
\label{gLSTDla}
\begin{algorithmic}[1]
\INPUT Initial $b(0)\in \Re^d$, $\zeta(0) \in \Re^{\ell\times d}$, 
$M(0)$ $d\times d$
positive definite,
and $t=1$
\Repeat
\State 
	$\elig(t) = \beta \lambda \dffA(t) \elig(t-1) +  \grad \psi(X(t))$;
\State 
	$b(t) = (1-\alpha_t)b(t-1) + \alpha_t\elig^\transpose(t)\grad c(X(t))$;
\State 
	$M(t)  = (1-\alpha_t) M(t-1)$\\
	\hfill
	$ + \alpha_t \bigl[\grad \psi(X(t)) 
	- \beta \dffA^\transpose(t+1) 
	\grad \psi(X(t+1)) \bigr]^\transpose  \elig(t)$;
\State $t = t+1$
\Until{$t \geq T$}
\OUTPUT $\theta=M^{-1}(T)b(T)$
\end{algorithmic}
\end{algorithm}  

As with Algorithm~\ref{dLSTD}, 
after obtaining the estimate of $\theta$ 
from Algorithm~\ref{gLSTDla},
the required estimate of $h_\beta$ is formed 
based on the recursions in 
equations~(\ref{e:kappatheta}),~(\ref{e:dTD4})
and~(\ref{e:dTD5}).

\subsection{Derivation and analysis}
For any $\lambda\in[0,1]$,
the parameter vector $\theta^*=\theta^*(\lambda)$ that 
solves~\eqref{e:MetricGalGradTD} 
is a Galerkin approximation to the  exact solution which 
solves the fixed point equation~(\ref{e:GradDPEquDisc}).

The proof of the first part of 
\Prop{t:GradLSTDGquad} below
follows from the assumptions. In particular,
the non-recursive expression for $\zeta(t)$
is a consequence of the recursions in
Algorithm~\ref{gLSTDla}.
The proof of the second part of 
the proposition 
follows 
from~\eqref{e:MetricGalGradTD}.

\begin{proposition}
\label{t:GradLSTDGquad}
Suppose that assumptions A1--A3 hold, and that
$\|\psi\|^2$ and $\|\grad\psi\|_2^2$ are in~$\Lvx$.
Then:

$(i)$
There is a stationary version of the pair process
$(\bfmX,\bfelig)$ 
on the two-sided time axis, and for 
each $t\in\ZZ$ we have,
$$\zeta(t)=\sum_{k=0}^\infty(\beta\lambda)^k\big[\Theta^{t-k}
\Sens(k)\big]\nabla\psi(X(t-k)),$$
where $\Theta^{t-k} \Sens(k)=
\dffA(t)\dffA(t-1)\cdots\dffA(t-k+1)$.

$(ii)$
The optimal parameter vector $\theta^*$ that satisfies \eqref{e:MetricGalGradTD} is any solution to $M \theta^* = b$,
in which,
\begin{equation}
M  = \Expect [  \bigl(\grad \psi(X(t)) - \beta \dffA^\transpose(t+1) 
\grad \psi(X(t+1)) \bigr)^\transpose  \elig(t)   ],
\label{e:MDef_Lambda}
\end{equation}
and,
\begin{equation}
b =  \Expect \bigl[ (\elig(t))^\transpose  \grad c(X(t)) ],
\label{e:bDef_Lambda}
\end{equation}
where the expectations are under stationarity.
\end{proposition}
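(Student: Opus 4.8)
The two parts are largely independent. Part~$(i)$ asserts that unrolling the eligibility recursion of Algorithm~\ref{gLSTDla} produces a well-defined, stationary process; part~$(ii)$ asserts that, under the linear parametrization, the Galerkin fixed-point condition~\eqref{e:MetricGalGradTD} is exactly the linear system $M\theta^*=b$ with $M,b$ as in~\eqref{e:MDef_Lambda}--\eqref{e:bDef_Lambda}. The plan is to prove $(i)$ by repeating the argument behind \Lemma{t:gradLSTDstat} with $\beta$ replaced by $\beta\lambda$, and then to deduce $(ii)$ by a one-line substitution.

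\emph{Part $(i)$.} First I would use assumption~A1, together with the independence of $\bfmN$ from $X(0)$ (A2.\ref{a:N}), to fix a version of $(\bfmX,\bfmN)$ --- and hence of $\bfmX$ and of the matrices $\dffA(t)=\grad_x a(X(t-1),N(t))^\transpose$ --- that is stationary on the two-sided time axis, exactly as in the proofs of \Prop{t:LSTDconverges} and \Lemma{t:gradLSTDstat}. Iterating the update $\elig(t)=\beta\lambda\,\dffA(t)\elig(t-1)+\grad\psi(X(t))$ of Algorithm~\ref{gLSTDla} backwards then yields, formally,
\[
\elig(t)=\sum_{k=0}^{\infty}(\beta\lambda)^k\big[\dffA(t)\dffA(t-1)\cdots\dffA(t-k+1)\big]\grad\psi(X(t-k)),
\]
which is the claimed formula with $\Theta^{t-k}\Sens(k)=\dffA(t)\cdots\dffA(t-k+1)$, the empty product ($k=0$) being the identity. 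What remains is to check that this series converges absolutely, $\pi$-a.s., and has finite second moment, so that $\bfelig$ is a genuine stationary process (a shift-commuting measurable functional of the past of $(\bfmX,\bfmN)$). The argument is identical to the one establishing~\eqref{e:EliVecDT} in \Lemma{t:gradLSTDstat}: the summability condition~\eqref{e:A3_dream} (a negative-Lyapunov-exponent hypothesis), the bound $\|\grad\psi\|_2^2\in\Lvx$, and $\sup_{x,n}\|\grad_x a(x,n)\|<\infty$ from~A2.\ref{a:nabla-a} control $\Expect_\pi\big\|[\Theta^{t-k}\Sens(k)]\grad\psi(X(t-k))\big\|$ term by term; and since $0\le\beta\lambda\le\beta<1$, every summand is dominated in norm by the corresponding summand of the already-convergent series~\eqref{e:EliVecDT} for $\varphi(t)$, so in fact no new estimate is needed beyond \Lemma{t:gradLSTDstat}.

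\emph{Part $(ii)$.} I would substitute the linear parametrization in the form $\grad h_\beta^{\theta^*}=\grad\psi\,\theta^*$ (cf.~\eqref{e:gradhtheta} and the definition~\eqref{e:gradpsi} of $\grad\psi$) into~\eqref{e:MetricGalGradTD}. By linearity of the expectation, $\theta^*$ factors out of the two $h_\beta^{\theta^*}$-terms and~\eqref{e:MetricGalGradTD} becomes
\[
\Expect\big[\elig^\transpose(t)\grad c(X(t))\big]
-\Expect\big[\elig^\transpose(t)\big(\grad\psi(X(t))-\beta\,\dffA^\transpose(t+1)\grad\psi(X(t+1))\big)\big]\theta^*=0,
\]
which is $M\theta^*=b$ with $b$ as in~\eqref{e:bDef_Lambda} and $M$ the coefficient matrix of~\eqref{e:MDef_Lambda} (the recursion for $M(t)$ in Algorithm~\ref{gLSTDla} and~\eqref{e:MDef_Lambda} simply record this matrix in transposed form). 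All the expectations are finite in steady state: Cauchy--Schwarz with $\|\grad c\|_2^2,\|\grad\psi\|_2^2\in\Lvx$, $\sup_{x,n}\|\grad_x a\|<\infty$, and the second-moment bound on $\elig(t)$ from part~$(i)$ give integrability with respect to~$\pi$.

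\emph{Where the work is.} Part~$(ii)$ is purely formal once $(i)$ is in hand, so the only substantive step is the absolute convergence and $L^2$-boundedness of the eligibility series in~$(i)$, which rests on the non-trivial hypothesis~\eqref{e:A3_dream}; because this has already been carried out for $\bfvarphi$ in \Lemma{t:gradLSTDstat} and $\beta\lambda\le\beta$, it transfers with no extra effort, and I expect the whole proof to be short. The two things to watch are: (a) keeping the transposes straight when matching the coefficient matrix in~\eqref{e:MetricGalGradTD} with the $M(t)$-recursion of Algorithm~\ref{gLSTDla}; and (b) that $\elig(t)$ depends on $N(1),\dots,N(t)$, not on $\bfmX$ alone, so the relevant stationary object is the joint process $(\bfmX,\bfmN,\bfelig)$ --- a point that matters only when one later invokes the ergodic theorem for the companion convergence claim about Algorithm~\ref{gLSTDla}.
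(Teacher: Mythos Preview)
Your proposal is correct and matches the paper's own approach: the paper states that part~$(i)$ ``follows from the assumptions'' with the non-recursive expression for $\elig(t)$ being ``a consequence of the recursions in Algorithm~\ref{gLSTDla}'' (i.e., exactly your unrolling argument, parallel to \Lemma{t:gradLSTDstat} with $\beta$ replaced by $\beta\lambda$), and that part~$(ii)$ ``follows immediately from~\eqref{e:MetricGalGradTD}'' (your one-line substitution of $\grad h_\beta^{\theta}=\grad\psi\,\theta$). Your added care about absolute convergence via domination by~\eqref{e:EliVecDT}, and your flag about the transpose bookkeeping between~\eqref{e:MetricGalGradTD} and~\eqref{e:MDef_Lambda}, are both apt and go slightly beyond what the paper spells out.
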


The following then follows from the law of large numbers:
\begin{proposition}
\label{t:dLSTDlambdaconverges}
Suppose that the assumptions of \Prop{t:dLSTDconverges} hold. Suppose moreover that the matrix $M$ appearing in \eqref{e:MDef_Lambda} 
is of full rank. Then,  for each initial conditions $b(0)  \in\Re^d$ and $M(0) \in \Re^{d \times d}$,  
the {\em $\nabla$-LSTD($\lambda$)} Algorithm~\ref{gLSTDla}
is consistent:
\[
\lim\limits_{t\to\infty} M^{-1}(t) b(t) = \theta^*  \quad \text{a.s.},
\]
where $\theta^*=\theta^*(\lambda)$ solves \eqref{e:MetricGalGradTD}.  
   
This limit holds both for the stationary version  $(\bfmX,\bfvarphi)$ 
defined in     \Prop{t:GradLSTDGquad},   and
also for $\varpi$-almost all initial 
$(X(0),\elig(0) )$, where $\varpi$ denotes the marginal for the 
stationary version  $(\bfmX,\bfvarphi)$.   
\end{proposition}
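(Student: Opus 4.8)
\medskip\noindent\emph{Proof plan.}\ The plan is to mimic the proof of \Prop{t:dLSTDconverges}: with the gain $\alpha_t=1/t$ the updates in Algorithm~\ref{gLSTDla} are exponentially weighted averages that collapse to ordinary Cesàro averages, so that $b(t)$ and $M(t)$ converge, by the ergodic theorem, to the steady-state quantities $b$ and $M$ identified in \Prop{t:GradLSTDGquad}~$(ii)$, after which full rank of $M$ forces $M^{-1}(t)b(t)\to M^{-1}b=\theta^*$. Concretely, since $1-\alpha_1=0$ the initial data $b(0),M(0)$ drop out after the first step, and for all $t\ge1$,
\[
b(t)=\frac1t\sum_{s=1}^{t}\elig^{\transpose}(s)\,\grad c(X(s)),
\qquad
M(t)=\frac1t\sum_{s=1}^{t}\bigl[\gradpsi(X(s))-\beta\dffA^{\transpose}(s+1)\gradpsi(X(s+1))\bigr]^{\transpose}\elig(s).
\]
First I would invoke \Prop{t:GradLSTDGquad}~$(i)$ to realize $(\bfmX,\bfelig)$ as a stationary process on the two-sided time axis with the closed form $\elig(t)=\sum_{k\ge0}(\beta\lambda)^k\big[\Theta^{t-k}\Sens(k)\big]\gradpsi(X(t-k))$; then apply the law of large numbers to the two summands above, whose steady-state means are exactly $b$ in \eqref{e:bDef_Lambda} and $M$ in \eqref{e:MDef_Lambda}; and finally use continuity of matrix inversion at the invertible matrix $M$ (so that $M(t)$ is itself invertible, and its pseudo-inverse equals its inverse, for all large $t$) to conclude $M^{-1}(t)b(t)\to\theta^*$.

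The one point that needs care is the legitimacy of the ergodic averaging: that the shift acting on the stationary process is ergodic, and that the summands $\elig^{\transpose}(t)\grad c(X(t))$ and $\bigl[\gradpsi(X(t))-\beta\dffA^{\transpose}(t+1)\gradpsi(X(t+1))\bigr]^{\transpose}\elig(t)$ are integrable under stationarity. Ergodicity is automatic, since $(\bfmX,\bfelig)$ is a measurable factor of the shift on $(\bfmX,\bfmN)$, which is ergodic by $v$-uniform ergodicity (A1) and the i.i.d.\ structure of $\bfmN$. For integrability I would substitute the closed form for $\elig(t)$, apply the Cauchy--Schwarz inequality termwise, and bound each contribution by a multiple of $(\beta\lambda)^k\,\Expect_\pi\!\bigl[\|\grad c(X(t))\|\,\|\Theta^{t-k}\Sens(k)\|\,\|\gradpsi(X(t-k))\|\bigr]$; summability over $k$ then follows from assumption~A3 --- in particular the bound \eqref{e:A3_dream} --- together with the hypotheses $c^2,\|\grad c\|_2^2,\|\psi\|_2^2,\|\grad\psi\|_2^2\in\Lvx$ and A1. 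Because $\beta\lambda\le\beta\le1$, this estimate is essentially the one already carried out for the $\lambda=1$ eligibility vector in \Lemma{e:bRep}, so no genuinely new bound is required.

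It then remains to upgrade a.s.\ convergence from the stationary pair process to $\varpi$-almost every fixed initial $(X(0),\elig(0))$. The cleanest route is to observe that $(\bfmX,\bfelig)$ is itself a Markov chain on $\Re^\ell\times\Re^{\ell\times d}$ --- the pair $(X(t),\elig(t))$ together with the fresh disturbance $N(t+1)$ determines $(X(t+1),\elig(t+1))$ through \eqref{e:SP_disc} and the recursion of Algorithm~\ref{gLSTDla} --- so a disintegration of the stationary law $\varpi$ transfers the a.s.\ statement to $\varpi$-a.e.\ starting point. Alternatively one can argue directly that the dependence of $\elig(t)$ on $\elig(0)$ decays: the difference between the stationary eligibility vector and the one started from $\elig(0)$ equals $(\beta\lambda)^t\big[\dffA(t)\cdots\dffA(1)\big]\bigl(\elig(0)-\elig^{\circ}(0)\bigr)$ minus a summable tail, and the negative Lyapunov exponent implicit in A3 makes both pieces vanish a.s., hence also in Cesàro average, leaving the limit unchanged. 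I expect the integrability verification for the eligibility-weighted functionals to be the main (though modest) obstacle; the forgetting of $\elig(0)$ is the only ingredient genuinely new relative to \Prop{t:dLSTDconverges}, and it is routine given A3.
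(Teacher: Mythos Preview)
Your proposal is correct and follows the same approach as the paper, which dispatches the result in a single line (``follows from the law of large numbers'') after having set up the stationary pair process and the steady-state identities for $M$ and $b$ in \Prop{t:GradLSTDGquad}. You supply the details the paper omits---collapsing the $\alpha_t=1/t$ recursion to a Ces\`aro average, checking ergodicity and integrability via A3, and handling the $\varpi$-a.e.\ starting point by disintegration---none of which requires an idea beyond what the paper already uses for \Prop{t:dLSTDconverges}.
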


\subsection{Optimality of $\nabla$-LSTD($1$)}
\label{s:gradoptimal}

Although different values of $\lambda$ in LSTD($\lambda$) lead
to different parameter estimates $\theta^*=\theta^*(\lambda)$,
it is known that in the case $\lambda=1$ the parameter estimates 
obtained using the standard LSTD($\lambda$) algorithm converge 
to the solution to the minimum-norm problem \eqref{e:TDgoal},
cf.~\cite{tsiroy97a,devmey17a}. Similarly, it is shown here that 
the parameter estimates obtained using the 
$\nabla$-LSTD($1$) algorithm converge to the solution of 
the minimum-norm problem \eqref{e:OptThetaGhalphaDT}.

\begin{proposition}
\label{t:dLSTDlambdaEqdLSTD}
Suppose that the assumptions of \Prop{t:dLSTDconverges} hold. Then, 
the sequence of parameters $ \bftheta=\{\theta(t)\}$ obtained using the 
{\em $\nabla$-LSTD($1$)} Algorithm~\ref{gLSTDla},
converges 
to the solution of the minimum-norm problem \eqref{e:OptThetaGhalphaDT}.
\end{proposition}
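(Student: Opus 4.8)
The plan is to show that at $\lambda=1$ the limiting linear fixed-point equation solved by the $\nabla$LSTD($\lambda$) algorithm coincides with the one characterizing the minimizer of~\eqref{e:OptThetaGhalphaDT}, so that \Prop{t:dLSTDlambdaconverges} and \Prop{t:dLSTDconverges} force the two limits to agree. Write $M_1,b_1$ for the quantities in~\eqref{e:MDef_Lambda}--\eqref{e:bDef_Lambda} evaluated at $\lambda=1$, and $M,b$ for those in~\eqref{e:MDef}--\eqref{e:bDef}. It suffices to establish the two identities $b_1=b$ and $M_1=M$, treated first as pure computations: once they hold, $M_1=M$ is of full rank by hypothesis, so \Prop{t:dLSTDlambdaconverges} applies and yields $\theta(t)\to\theta^*$ with $M\theta^*=b$, which by \Prop{t:LSTDquad} (and \Prop{t:dLSTDconverges}) is precisely the minimizer of the minimum-norm problem~\eqref{e:OptThetaGhalphaDT}.

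For $b_1$: setting $\lambda=1$ in part~$(i)$ of \Prop{t:GradLSTDGquad} shows that the eligibility matrices $\elig(t)$ of Algorithm~\ref{gLSTDla} reduce exactly to the process $\varphi(t)$ of Algorithm~\ref{dLSTD} displayed in~\eqref{e:EliVecDT}. Hence $b_1=\Expect[\elig^\transpose(t)\grad c(X(t))]=\Expect[\varphi^\transpose(0)\grad c(X(0))]$, which equals $b$ by \Lemma{e:bRep}. For $M_1$: the $\lambda=1$ recursion reads $\elig(t+1)=\beta\dffA(t+1)\elig(t)+\grad\psi(X(t+1))$, so $\beta\dffA(t+1)\elig(t)=\elig(t+1)-\grad\psi(X(t+1))$; substituting this into~\eqref{e:MDef_Lambda} gives
\[
M_1=\Expect\bigl[(\grad\psi(X(t)))^\transpose\elig(t)\bigr]-\Expect\bigl[(\grad\psi(X(t+1)))^\transpose\elig(t+1)\bigr]+\Expect\bigl[(\grad\psi(X(t+1)))^\transpose\grad\psi(X(t+1))\bigr].
\]
The first two terms cancel by stationarity of $(\bfmX,\bfelig)$, leaving $M_1=\Expect_\pi[(\grad\psi(X))^\transpose\grad\psi(X)]=M$, as required.

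The algebra above is a one-line telescoping; the only point needing care is the finiteness of the cross-moments such as $\Expect[(\grad\psi(X(t)))^\transpose\elig(t)]$ when $\lambda=1$, since then the eligibility recursion is damped only by $\beta$ while the random products $\dffA(t+1)\dffA(t)\cdots$ are not controlled pathwise. This is exactly what assumptions~A1--A3 supply: the summability bound~\eqref{e:A3_dream}, applied to pairs of coordinates of $\psi$ (which lie in the admissible class because $\|\grad\psi\|_2^2\in\Lvx$), guarantees that the series defining $\elig(t)=\varphi(t)$ converges absolutely in $L_1$ under all the bilinear pairings involved, which legitimizes both the stationarity-based cancellation and the interchange of summation and expectation, exactly as in the proof of \Lemma{e:bRep}. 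Thus the main obstacle is bookkeeping rather than a genuine analytic difficulty: once integrability is in hand, the identifications $M_1=M$ and $b_1=b$ are immediate, and the proposition follows.
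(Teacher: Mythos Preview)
Your proposal is correct and follows essentially the same approach as the paper: identify $\elig(t)\equiv\varphi(t)$ when $\lambda=1$ so that $b_1=b$ via \Lemma{e:bRep}, and obtain $M_1=M$ by a telescoping cancellation using the eligibility recursion together with stationarity. The only cosmetic difference is that the paper substitutes the recursion $\elig(t)=\beta\dffA(t)\elig(t-1)+\grad\psi(X(t))$ into the term $\Expect[(\grad\psi(X(t)))^\transpose\elig(t)]$, whereas you substitute $\beta\dffA(t+1)\elig(t)=\elig(t+1)-\grad\psi(X(t+1))$ into the other term; both yield the same cancellation.
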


\begin{proof}
From \Prop{t:dLSTDlambdaconverges}, the estimates $\bftheta$ obtained using 
the $\nabla$-LSTD($\lambda$) algorithm converge to  $ \theta^* = M^{-1}b$, 
where $M$ and $b$ are defined in \eqref{e:MDef_Lambda} and 
\eqref{e:bDef_Lambda}, and $\elig(t)$ defined by the 
recursion in Algorithm~\ref{gLSTDla}. 
It remains to be shown that this coincides with the parameter  that solves \eqref{e:OptThetaGhalphaDT} in the case $\lambda = 1$.

Substituting
the identity,
\begin{equation}
\elig(t+1)   = \beta \dffA(t+1) \elig(t) + \grad \psi(X(t+1)),
\label{e:eliglambda1}
\end{equation}
in \eqref{e:MDef_Lambda},
gives the following representation,
\[
\begin{aligned}
M 
& =  - \beta \Expect[ \bigl (\dffA^\transpose(t+1) 
	\grad \psi(X(t+1)) \bigr)^\transpose  \elig(t)  ] 
	+ \Expect [  \bigl (\grad \psi(X(t)) \bigr )^\transpose \elig(t) ] \\ 
& =  - \beta \Expect[ \bigl (\dffA^\transpose(t+1) 
	\grad \psi(X(t+1)) \bigr)^\transpose  \elig(t)  ]
	+ \beta \Expect [  \bigl (\dffA^\transpose(t) 
	\grad \psi(X(t)) \bigr )^\transpose \elig(t-1) ] \\ 
& \qquad\qquad 
	+ \Expect [  \bigl (\grad \psi(X(t)) \bigr )^\transpose 
	\grad \psi(X(t))  ] \\ 
& =  
	\Expect[\bigl (\grad \psi(X(t)) \bigr )^\transpose \grad \psi(X(t))],
\end{aligned}
\]
where the last equality is obtained using time stationarity   of $\bfmX$.
Therefore, the matrix $M$ obtained using the $\nabla$-LSTD($1$) 
algorithm coincides with the matrix $M$ of 
$\nabla$-LSTD. 

\begin{figure*}
\Ebox{1}{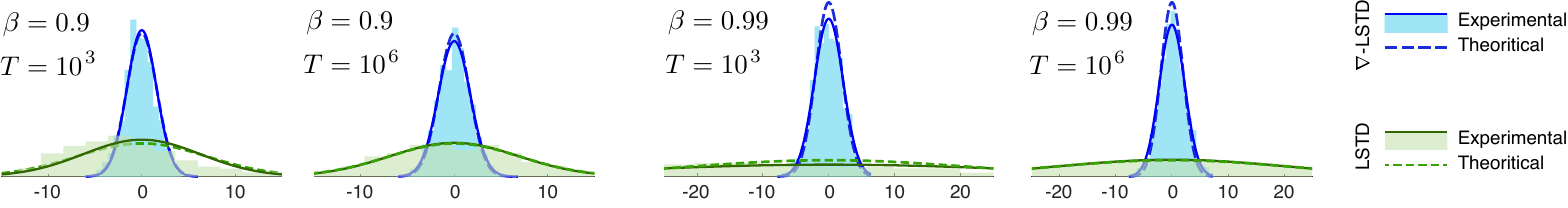}
\caption{Histograms of $\sqrt{T}\big(\theta_2(T) - \theta^*_2 \big)$ produced
by the LSTD-learning and the $\nabla$-LSTD-learning algorithms, 
after $T=10^3$ and   $T=10^6$ iterations.
The \textit{experimental} bell-curves are those obtained using {\tt histfit} in Matlab using parameter estimates from multiple runs of each algorithm.
Computation of the theoretical variance is obtained in Appendix~\ref{s:var_ana_LSTD}. -- see \eqref{e:nabla_TD_avar}, and the discussion that follows.  
} 
\label{f:HistLinear}	
\end{figure*}

To obtain the required representation for $b$,  recall 
that  $\elig(t) \equiv \varphi(t)$,  where the former 
is defined in \eqref{e:eliglambda1} and the latter in the
recursion of Algorithm~\ref{dLSTD}.  
Applying \Lemma{t:bRep}, 
it follows that
the vector $b$ of the $\nabla$-LSTD(1) algorithm \eqref{e:bDef_Lambda} 
coincides with the vector $b$ of $\nabla$-LSTD 
algorithm \eqref{e:bpsi_disc_init}. 
\end{proof}

\section{Numerical Results}
\label{s:sim}

Collected together here are results from several
numerical experiments, which illustrate the general
theory of the previous sections and also suggest possible
extensions.

Since, under general conditions, all estimates considered 
obey a central limit theorem
\cite{kusyin97}, we use the \textit{asymptotic variance} to be the 
primary figure of merit in evaluating performance.
The relevant variances are estimated by collecting data from many independent  runs of each   algorithm.

Specifically, we show comparisons between the performance 
achieved by LSTD, $\nabla$-LSTD, LSTD($\lambda$) and 
the $\nabla$-LSTD($\lambda$) algorithms. In examples where there 
is regeneration (the 
Markov chain $\bfmX$ visits some state infinitely often),
we replace the LSTD algorithm with the regenerative 
LSTD algorithm of \cite{CTCN,huachemehmeysur11}. The regenerative algorithm is found to have reduced variance in these experiments.   

The standard TD($\lambda$) algorithm was also considered, but in 
all examples its variance was found to be  several orders of 
magnitude greater than alternatives.   The matrix-gain variant with minimal  
asymptotic variance 
  is precisely LSTD($\lambda$) \cite{boy02,devmey17a}.
This  was found to have better performance,  and was therefore used for comparisons;
the reader is referred to Section~2.4 of \cite{devmey17a} for details on the 
relationship between TD($\lambda$) and LSTD($\lambda$) algorithms, 
and their asymptotic variances.
 
We also consider two extensions of $\nabla$-LSTD for a specific example: 
The approximation of the relative value function for the speed-scaling model of \cite{chehuakulunnzhumehmeywie09}.  First, for this reflected process evolving on $\Re_+$, it is shown that the sensitivity process $\bfSens$ can be defined, subject to conditions on the dynamics near the boundary.   Second,  the algorithm is tested in a discrete state space setting.  There is no apparent justification for this approach, but it   performs remarkably
well in simulations.

\subsection{Linear model}
\label{s:lin_mod}

A scalar linear model offers perhaps the clearest illustration 
of the performance of the $\nabla$-LSTD-learning algorithm, 
demonstrating its
superior convergence rate compared to the standard
LSTD algorithm.

Consider the scalar linear process  
\[
X(t+1) = a X(t) + N(t+1),\quad t\geq 1,
\]
where $a\in(0,1)$ is a constant 
and $\bfmN$ is  i.i.d.\ $\clN(0,1)$.  The cost function is taken to be quadratic, $c(x)=x^2$,
and for the basis of the approximating function class is chosen as $\psi(x) = (1,x^2)^\transpose$.
The true value function $h_\beta$ turns out to also be 
quadratic and symmetric, which means that it 
can be expressed exactly 
in terms of $\psi$,
as $h_\beta=h_\beta^{\theta^*}$,
with,
$$
h_\beta^{\theta^*}(x)=\sum_j\theta_j\psi_j(x)=\theta_1^*+\theta_2^* x^2,
$$
for appropriate $\theta^*\in\Re^2$; cf.~(\ref{e:hLinearPar}).
The
constant term, $\theta_1^*$, 
can be estimated 
as $\kappa(\theta)$ using~\eqref{e:kappatheta} 
in the $\nabla$-LSTD algorithm.
Therefore, the interesting part of the problem 
is to estimate the optimal value of the second parameter,
$\theta_2^*$.

For this linear model, the first recursion for the 
$\nabla$-LSTD Algorithm~\ref{dLSTD} becomes,
\begin{equation}
\varphi(t) = \beta a \varphi(t-1) +  \grad \psi(X(t)),
\label{e:dLSTDLinear}
\end{equation}  
while in the LSTD Algorithm~\ref{LSTD},
\begin{equation}
\varphi(t) =  \beta \varphi(t-1) +    \psi(X(t)).
\label{e:LSTDLinear}
\end{equation}
Although both of these algorithms are consistent, there
are two differences which immediately suggest that 
the asymptotic variance 
of $\nabla$-LSTD should be much smaller
than that of LSTD.  First,
the additional discounting factor $a$ appearing in \eqref{e:dLSTDLinear}, 
but absent in \eqref{e:LSTDLinear}, is the reason 
why the asymptotic variance of the $\grad$-LSTD is bounded
over $0<\beta<1$,
whereas that of the standard LSTD 
grows without bound as $\beta \to 1$. 
Second,
the gradient reduces the growth rate of each function of $x$;
in this case, reducing the quadratic growth of $c$ and $\psi$ 
to the linear growth of their derivatives.

In the numerical experiments surveyed here we use $a=0.7$,
and   two different discounting factors: 
$\beta=0.9$, and $\beta=0.99$. 
The optimal parameters can be computed explicitly,
giving
$\theta^*=(16.1, 1.79)^\transpose$ when $\beta=0.9$,
and $\theta^*=(192.27, 1.9421)^\transpose$ when $\beta=0.99$.
The histogram of the estimated value of $\theta_2$ was
computed based on 1000 repetitions of the
same experiment, where the output of each algorithm 
was recorded after $T=10^3$ and after $T=10^6$ iterations.
The results are shown in Figure~\ref{f:HistLinear}.


With $\beta=0.9$ it was found that the variance of $\theta_2$  using the standard LSTD algorithm is about ten times the variance using the  $\nabla$-LSTD algorithm.  Consequently,     $\nabla$-LSTD-learning 
is about $10$~times faster than LSTD in this example.
This difference in performance grows with
larger~$\beta$, as observed on the two histograms on the right hand side of   Figure~\ref{f:HistLinear}.

In conclusion, in contrast to the standard LSTD algorithm,
the asymptotic variance of 
$\nabla$-LSTD in this example is bounded uniformly 
over $0<\beta<1$, and the algorithm can also be used 
to estimate the relative value function \eqref{e:ACOE_disc2}.

 We next consider an example with   non-linear dynamics. 

\subsection{Dynamic speed scaling}
\label{s:dyn_ss}

Dynamic speed scaling refers to control techniques for power management 
in computer systems. The goal   is to control the processing speed so as 
to optimally balance energy and delay costs; 
this can be done by reducing (or increasing) the processor speed at times 
when the workload is small (respectively, large). For our present purposes,
speed scaling is a simple stochastic control problem, namely, 
a single-server queue with controllable service rate.

This example was considered in \cite{chehuakulunnzhumehmeywie09} with the goal of minimizing the average cost \eqref{e:barc}. Approximate policy iteration was used to obtain the optimal control policy, and a regenerative form of the LSTD-learning was used to provide an approximate relative value function $h$ 
at each iteration.

The underlying discrete-time Markov decision process model is as follows:   
At each time $t$,  the state $X(t)$ is the (not necessarily
integer valued) queue length,
which can also be interpreted more generally as the size of the 
workload in the system;  $N(t) \geq 0$ is number of job arrivals;
and $U(t)$ is the service completion at time $t$, which is
subject to the constraint $0 \leq U(t) \leq X(t)$.  The evolution equation is the controlled random walk: 
\begin{equation}
X(t+1) = X(t) - U(t) + N(t+1)\, , \quad t \geq 0.
\label{e:CRW}
\end{equation}  
Under the assumption that $\bfmN$ is i.i.d.\  and that
$\bfmU=\{U(t)\}$ is 
obtained using a state feedback policy, $U(t) = \fee(X(t))$,  
the controlled model  is a Markov chain of the form \eqref{e:SP_disc}.


In the experiments that follow in Sections~\ref{s:exp}
and~\ref{s:geo} we consider the problem of approximating
the relative value function $h$, for a fixed state feedback policy $\fee$,
so $\beta=1$ throughout.
We consider the cost function $c(x,u) = x+u^2/2$, and feedback law
$\fee$ given by,
\begin{equation}
\fee(x) = \min\{x,1+\epsy \sqrt{x}\},\quad x\in\Re,
\label{e:EpsilonPolicy}
\end{equation}
with $ \epsy > 0$.  This is similar in form to the optimal average-cost 
policy computed in \cite{chehuakulunnzhumehmeywie09},
where it was shown 
that the value function is well-approximated by 
the function $h^\theta(x) = \theta^\transpose \psi(x)$ 
for some $\theta\in\Re^2_+$, and $\psi(x) = (x^{3/2} ,x)^\transpose$.  
As in the linear example, the gradient  $\grad \psi(x) = (\frac{3}{2} x^{1/2} , 1)^\transpose$ has slower growth as a function of $x$.

On a more technical note, 
we observe that
implementation of the $\nabla$-LSTD algorithms requires attention 
to the boundary of the state space:
The sensitivity process $\bfSens$ defined in~\eqref{e:SensDef} requires 
that the state space be open, and that the dynamics are smooth.  
Both of these assumptions are violated in this example.  However, 
with $X(0)=x$, 
we do have a representation for the right derivative,
$\Sens(t) \eqdef \partial^+ X(t)/\partial x,$
which evolves according to the recursive equation,
\begin{equation}
\Sens(t+1)  = \dffA(t+1) \Sens(t)  = 
\big[1- \ddxp \fee\, (X(t))\big]\Sens(t),
\label{e:SensSS}
\end{equation}
where the `$+$' again denotes right derivative. 
Therefore, we adopt the convention,
\begin{equation}
\dffA(t+1)  = 1 - \ddxp \fee\, (X(t))
\label{e:diffAss}
\end{equation} 

We begin with the case in which the marginal of 
$\bfmN$ is exponential.  In this case the right 
derivatives and ordinary derivatives coincide a.e.
The regenerative LSTD algorithm used in 
\cite{chehuakulunnzhumehmeywie09} 
is not applicable in this case because there is no state 
that is visited infinitely often with probability one. We therefore restrict our comparisons to the LSTD($\lambda$) algorithms.

\subsubsection{Exponential arrivals}
\label{s:exp}

Suppose the $N(t)$ are i.i.d.\ Exponential$(1)$ 
random variables, and that
$\bfmX$ evolves on $\Re_+$ 
according to (\ref{e:CRW}) and~\eqref{e:EpsilonPolicy}.
The derivatives $\dffA(t)$
in~\eqref{e:diffAss} become, 
\begin{equation}
\dffA(t+1) 
=  \one\{X(t) > \bar{\epsy}\} 
\bigl[ 1 - \half \epsy X(t)^{-1/2} \bigr] \,, 
\label{e:diffASS}
\end{equation}
where  
$\bar{\epsy} = \half ( \epsy + \sqrt{\epsy^2 + 4})$ and
$\one $ is the indicator function.

For the implementation
of the $\nabla$-LSTD Algorithm~\ref{dLSTD},
we note that 
the recursion for $\bfvarphi$,
\begin{equation} 
\varphi(t+1) = \dffA(t+1) \varphi(t) +  \grad \psi(X(t+1)),
\label{e:RdTD3}
\end{equation}
regenerates: Based on \eqref{e:diffASS}, 
$\varphi(t+1)  =  \grad \psi(X(t+1))$  when $X(t)\le \bar{\epsy}$.
The second recursion in Algorithm~\ref{dLSTD} becomes,
\[
b(t+1) = b(t) + \alpha_{t+1} \bigl( -b(t) 
+ \grad c(X(t+1)) \varphi(t+1) \bigr),
\]
in which,   
\begin{equation}
\begin{aligned}
\label{e:gradcSS}
\grad c(X(t)) & = 1 + \fee(X(t)) \grad \fee(X(t)),
\\
\textstyle
\grad \fee(X(t))  & = \one\{X(t) \leq \fee(X(t))\} +
\frac{\epsy}{2} \frac{1}{\sqrt{X(t)}}  \one\{X(t) > \bar{\epsy}\} .
\end{aligned}
\end{equation}

Implementation of the $\nabla$-LSTD($\lambda$) Algorithm~\ref{gLSTDla}
uses similar modifications, with $\{\dffA(t)\}$ and 
$\{\grad c (X(t))\}$ obtained 
using \eqref{e:diffASS} and \eqref{e:gradcSS}.

Various forms 
of the TD($\lambda$) algorithms with $\lambda \in [0, 1)$ were implemented for comparison, but as reasoned in \Section{s:Intro}, all of them appeared to have infinite asymptotic variance.
Implementation of the LSTD($\lambda$) algorithm 
resulted in improved performance. Since this is an average-cost problem, Algorithm~\ref{LSTDla} must be  modified slightly \cite{tsiroy99a,boy02, CTCN}:

\begin{algorithm}[H]
\caption{\em LSTD($\lambda$) algorithm for average cost}
\label{LSTDlaavg}
\begin{algorithmic}[1]
\INPUT Initial $\eta(0) \in \Re^+ $, $b(0),\varphi(0), \eta_\psi(0) \in \Re^d$, $M(0)$ $d\times d$
positive definite,
and $t=1$
\Repeat
\State $\eta(t) = (1- \alpha_t) \eta(t - 1) +  \alpha_{t} c(X(t))$
\State $ \eta_\psi(t) = (1-\alpha_t) \eta_\psi(t-1) + \alpha_t \psi(X(t)) $
\State $\tilpsi(t) \eqdef \psi(X(t)) - \eta_\psi(t)$
\State $\elig(t)  = \lambda \elig(t-1) +  \tilpsi(X(t))$;
\State $b(t)  =   (1- \alpha_t) b(t-1) + \alpha_t \elig(t) \big( c(X(t))-\eta(t) \big)$;
\State $M(t) = (1-\alpha_t)  M(t-1)$\\
	\hfill
	$ + \alpha_t \elig(t) \bigl[\tilpsi(X(t)) - \tilpsi(X(t+1)) \bigr]^\transpose$;
\State $t = t+1$
\Until{$t \geq T$}
\OUTPUT $\theta=M^{-1}(T)b(T)$
\end{algorithmic}
\end{algorithm}

Other than taking $\beta=1$, the
main difference between Algorithms~\ref{LSTDla} and~\ref{LSTDlaavg} is that 
we have replaced the cost function $c(X(t))$ with its centered version,
$c(X(t)) - \eta(t)$, where $\eta(t)$ is the estimate of the average cost 
after $t$ iterations. While this is standard for average cost problems, 
we have similarly replaced the basis function $\psi$ with $\tilpsi$ 
to restrict the growth rate of the eligibility vector $\elig(t)$, 
which in turn reduces the variance of the estimates 
$\bftheta=\{\theta(t)\}$. This is justified because the approximate 
value functions $h^\theta_a = \theta^\transpose \tilpsi$ differs 
from $h^\theta_b = \theta^\transpose \psi$ only by a constant term, 
and the relative value function is unique only up to additive constants. 
Experiments where $\psi$ was used instead of $\tilpsi$ resulted 
in worse performance.

\Figure{fig:TDvsGTD_Expo1_EpsyPoint5} shows the histogram of the estimates
for $\theta_1$ and $\theta_2$ 
obtained using $\nabla$-LSTD-learning, LSTD($0$)-learning, and 
$\nabla$-LSTD($\lambda$)-learning, $\lambda=0$ and $0.5$, 
after $T=10^5$ time steps.

\begin{figure}[h]  
\includegraphics[width=1\hsize]{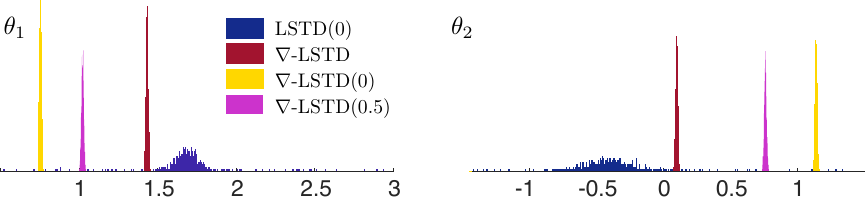}
\caption{Histograms of the parameter estimates obtained using 
the LSTD and $\nabla$-LSTD algorithms after $T = 10^5$ iterations, 
under the stationary policy \eqref{e:EpsilonPolicy} with $\epsy = 0.5$; 
$\bfmN$ is i.i.d. exponential.}
\label{fig:TDvsGTD_Expo1_EpsyPoint5}
\end{figure}

As noted earlier in Section~\ref{s:gradoptimal},
we observe that, as expected, 
different values of $\lambda$ lead
to different parameter estimates $\theta^*(\lambda)$,
for both the LSTD($\lambda$) and the $\grad$-LSTD($\lambda$)
classes of algorithms.


\subsubsection{Geometric arrivals}
\label{s:geo}

In \cite{chehuakulunnzhumehmeywie09}, the authors consider a discrete state 
space, with  $N(t)$ geometrically distributed on an integer lattice $\{0, \DeltaA, 2\DeltaA,\dots\}$, $\Delta > 0$.
In this case, the  theory developed for the $\nabla$-LSTD algorithm does not fit the model since we have no convenient representation of a sensitivity process. 
Nevertheless, the algorithm can be run by replacing gradients with ratios of  differences.  In particular, in implementing the algorithm we 
substitute the definition \eqref{e:diffASS} with,
\[
\dffA(t) = 1 -   [ \fee(X(t) + \DeltaA) - \fee(X(t))  ]/ \DeltaA,
\] 
and $\grad c$ is approximated similarly.  
For the distribution of $N(t)$ we take, 
$\Prob (N(t)=n\DeltaA) = (1-p_A)^{n} p_A$;
the values $p_A = 0.04$ and $\DeltaA = 1/24$ were chosen, so that $\Expect[N(t)] = 1$.


The sequence of steps followed in the regenerative LSTD-learning algorithm 
are similar to Algorithm~\ref{LSTD} \cite{chehuakulunnzhumehmeywie09, CTCN}:
\begin{algorithm}[H]
\caption{\em Regenerative LSTD algorithm for average cost}
\label{ReLSTD}
\begin{algorithmic}[1]
\INPUT Initial $\eta(0) \in \Re^+ $, $b(0),\varphi(0), \eta_\psi(0) \in \Re^d$, $M(0)$ $d\times d$
positive definite,
and $t=1$
\Repeat
\State $\eta(t) = (1- \alpha_t) \eta(t - 1) +  \alpha_{t} c(X(t))$
\State $ \eta_\psi(t) = (1-\alpha_t) \eta_\psi(t-1) + \alpha_t \psi(X(t)) $
\State $\tilpsi(t) \eqdef \psi(X(t)) - \eta_\psi(t)$
\State $\varphi(t)  = \one\{X(t-1) \neq 0\}  \varphi(t-1) +  \tilpsi(X(t))$;
\State $b(t)  =   (1- \alpha_t) b(t-1) + \alpha_t \varphi(t) \big( c(X(t))-\eta(t) \big)$;
\State $M(t) = (1- \alpha_t) M(t-1) + \alpha_t \big( \tilpsi(X(t)) 
\tilpsi^\transpose(X(t)) \big)$;
\State $t = t+1$
\Until{$t \geq T$}
\OUTPUT $\theta=M^{-1}(T)b(T)$
\end{algorithmic}
\end{algorithm}

\noindent
The eligibility vector $\varphi(t)$ regenerates
(i.e., resets to $0$) every time the queue empties. 
The regenerative LSTD($\lambda$) algorithm is obtained by making 
similar modifications, namely,
replacing Line $5$ of Algorithm~\ref{LSTDlaavg} with: 
\[
\elig(t)  = \one\{X(t-1) \neq 0\} \lambda \elig(t-1) +  \tilpsi(X(t)).
\]

\Figure{fig:VarAnaEpsyPt5T10e5} shows the histogram of $\theta(T)$ obtained 
using the regenerative LSTD, LSTD($0$), $\nabla$-LSTD, $\nabla$-LSTD($0$), 
and $\nabla$-LSTD($0.5$) algorithms, after $T=10^5$ iterations.
Observe that, again, the variance of the parameters obtained using 
the $\nabla$-LSTD algorithms 
is extremely small compared to the LSTD algorithms. 

\begin{figure}[h]
\includegraphics[width=1\hsize]{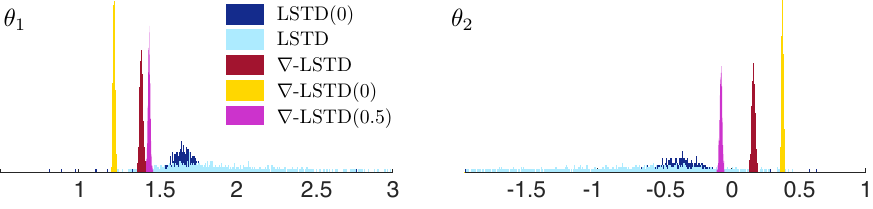}
\caption{Histograms of the parameter estimates obtained using the LSTD and 
$\nabla$-LSTD algorithms after $T=10^5$ iterations, under the stationary policy \eqref{e:EpsilonPolicy} with $\epsy = 0.5$; $\bfmN$ is i.i.d.\ geometric.}
\label{fig:VarAnaEpsyPt5T10e5}
\end{figure}

It is once again noticeable in 
\Figure{fig:VarAnaEpsyPt5T10e5} that,
as before in the results shown in 
\Figure{fig:TDvsGTD_Expo1_EpsyPoint5},
different values for $\lambda$ 
lead to different parameter estimates.
To compare  performance, 
the  relative \emph{Bellman error} was 
computed:
\[
\clE_B(x) = [P - I]h(x) + c(x) - \eta(T),
\]
where $P$ of course depends on the policy $\fee$,   
$h = {\bar{\theta}}^\transpose \psi$, where ${\bar{\theta}}$ is the mean 
of the $10^3$ parameter estimates obtained for each of the different algorithms, and $\eta(T)$ denotes the estimate of the average cost $\eta$ using $T = 10^5$ samples.
\Figure{TilPsiAndTilC_EpsyPoint5} shows plots of $\clE_B(x)$
for each of the five algorithms, 
for typical values of $\theta(T)$, with 
$T=10^3$, $10^4$ and $10^5$.
Once again, the feedback policy \eqref{e:EpsilonPolicy}  
was used, with $\epsy = 0.5$. 

\begin{figure*}
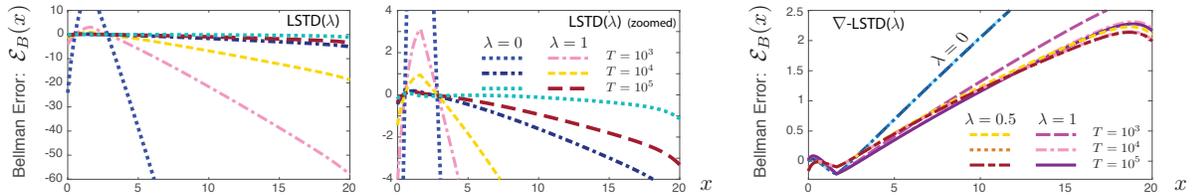

\centering
\Ebox{1}{BE1+2_small}
\caption{Bellman error corresponding to the estimates of $h$ 
using LSTD algorithms (left) and $\nabla$-LSTD algorithms (right).}
\label{TilPsiAndTilC_EpsyPoint5}
\end{figure*}

The Bellman error of
the $\nabla$-LSTD algorithms   appears to have
converged after $T = 10^3$ iterations,  
and the limit is nearly zero for the range of $x$
where the stationary distribution has non-negligible
mass.
Achieving similar performance using the LSTD 
algorithms requires more than $T = 10^5$ iterations.


\section{Conclusions}
\label{s:conclusions}

The new gradient-based temporal difference learning algorithms introduced 
in this work prove to be excellent alternatives to the more classical 
TD- and LSTD-learning methods for value function approximation. 
In the examples considered, the algorithms show remarkable capability 
to reduce the variance. There are two known explanations for this: 
\begin{romannum}
\item The magnitude of the functions that are used as inputs to the $\nabla$-LSTD algorithms are smaller compared to those in the case of LSTD algorithms; for example, if the basis functions for LSTD are polynomials of order $n$, then the basis functions for $\nabla$-LSTD will be of the order $n-1$. 
\item  There is an additional ``discounting'' factor that is inherent in the $\nabla$-LSTD algorithms, due to the derivative sequence $\{\dffA(t)\}$. For example, in the simple linear model experiment (cf. \Section{s:lin_mod}), we had $\dffA(t) \equiv a$, for some $a<0$, and when this term multiplies the original discount factor $\beta$, it can cause a significant reduction in the growth rate of the eligibility trace.
\end{romannum}

The introduction of $\lambda$ is to further reduce variance.  However, the optimal parameter vector obtained using the $\grad$-LSTD($\lambda$) algorithm will not in general solve the minimum-norm problem \eqref{e:OptThetaGhalphaDT} if $\lambda<1$.
As in the case of classical LSTD($\lambda$) algorithms, one can expect a bias versus variance trade-off in choosing $\lambda$ for the $\grad$-LSTD($\lambda$) algorithms. It is conjectured that as $\lambda \to 0$, the bias becomes larger, and perhaps the variance reduces. A bound similar to \eqref{e:tsiroy97a_Theorem1} is a topic of future research.

Though we only consider problems that involve ultimately estimating the value function for a fixed policy, estimating the gradient of the value function has its own applications: 

{\em State estimation. } In \cite{yanmehmey11}, the authors are interested in estimating the gradient of the relative value function, which is useful in obtaining the innovation gain for a non-linear filter.  

{\em Control. } When one is interested in optimizing the policy using policy iteration in a continuous state space setting, the gradient of the value function could be more useful than the value function, in the policy update step. 

{\em Mean-field games. }  As was recently emphasised in \cite{bensou18},
``...{\em it is not the Bellman equation 
that is important, but the gradient of its solution}.''  That is, it is the 
gradient of the value functions that is the critical quantity of interest 
in computation of solutions to mean field games.
This appears to indicate that the techniques in this paper might offer 
computational tools 
for approximating solutions in this class of optimal control problems, 
and in particular in applications to power systems.

Perhaps the biggest drawback of the $\nabla$-LSTD algorithms is the requirement of the knowledge of $\clA(t)$ defined in \eqref{e:dffA}.  In certain problems (such as in the queuing example discussed in \Section{s:dyn_ss}) this information is directly available.  Other applications 
may require a combination of system identification with the $\nabla$-LSTD algorithm. 



There are many other directions in which this work can be extended. Perhaps the most interesting open question is why the algorithm is so effective even in a discrete state space setting in which there is no theory to justify its application. It will also be worth exploring algorithms analogous to $\nabla$-LSTD, which use finite-differences instead of gradients in a discrete state space setting.

We are currently considering the extension of the $\nabla$-LSTD algorithms to a continuous time setting. Though the algorithms are straightforward to obtain, the convergence theory will require extensions of the theory of \cite{devkonmey17a}.  

Finally, it will  be interesting to see how the techniques developed here 
could be used to estimate the gradient of the state-action 
value function (either the Q-function of Watkins \cite{wat89} or SARSA \cite{rumnir94}). 
This will greatly simplify application to  control.

\section*{Acknowledgments}


A.M.D. and S.P.M. were supported from ARO grant W911NF1810334.  Additional financial support from the National Science Foundation,  EPCN 1935389 \&\ CPS~1646229,   and from a graduate fellowship from the University of Florida Informatics Institute is gratefully acknowledged. 

I.K. was supported by the Hellenic Foundation for Research and Innovation (H.F.R.I.) under the ``First Call for H.F.R.I. Research Projects to support Faculty members and Researchers and the procurement of high-cost research equipment grant,'' project number 1034.


\appendix  

\begin{center}
\Large{\bf Appendices}
\end{center}


\section{Proof of~\Prop{t:elegantQ}}
\label{s:proof_of_elegantQ}

Here we state and prove two simple technical lemmas
that are needed for the proof of~\Prop{t:elegantQ}.
Let $v$ denote the Lyapunov function in Assumption~A1.

\begin{lemma}
\label{t:FellerGen}
Let $R$ denote a transition kernel that has the Feller property
and satisfies,
for some $B_0<\infty$:
\[
Rv\, (x)  \eqdef \int R(x,dy) v(y)  \le B_0 v(x)  \,, \quad x\in\Re^\ell.
\]
And let $Z$ be a kernel that is absolutely continuous with respect to $R$,  
with density $\xi\colon \Re^\ell\times \Re^\ell \to \Re$ such that,
\[
Zg\, (x)   = \int R(x,dy) \xi(x,y) g(y)\,, \quad x\in \Re^\ell,  
\]
for any bounded measurable function $g \colon\Re^\ell \to\Re$.

If the density is continuous and for some $\delta\in (0,1)$,
\[
B_\xi \eqdef 
\sup_{x,y} \frac{|\xi(x,y)|}{v^\delta(y)}  <\infty,
\]
then $Z$ has the Feller property:  
$Zg$ is continuous whenever $g$ is bounded and continuous.  
\end{lemma}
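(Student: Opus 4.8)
The plan is to fix a bounded continuous $g$, normalised so that $\|g\|_\infty\le 1$ (note $Zg$ is finite, since $\int R(x,dy)|\xi(x,y)|\le B_\xi(Rv(x))^\delta\le B_\xi(B_0v(x))^\delta$ by Jensen's inequality for the concave map $t\mapsto t^\delta$), then fix $x_0\in\Re^\ell$ and an arbitrary sequence $x_n\to x_0$, and prove $Zg(x_n)\to Zg(x_0)$. The first point to record is that the Feller property of $R$ is precisely the statement that $R(x_n,\cdot)\Rightarrow R(x_0,\cdot)$ weakly; since $\Re^\ell$ is Polish and the family $\{R(x_n,\cdot):n\ge 0\}$ (the sequence together with its limit) is relatively compact in the weak topology, Prokhorov's theorem makes this family \emph{tight}. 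That tightness is what will allow limits to be passed through integrands which themselves depend on the integration variable.

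Next I would introduce a continuous truncation built from $v$: set $\chi_M(y)=\max\{0,\min\{1,M+1-v(y)\}\}$, which equals $1$ on $\{v\le M\}$, vanishes on $\{v\ge M+1\}$, and is continuous because $v$ is (Assumption~A1). Writing $Zg(x)=A_M(x)+B_M(x)$ with $A_M(x)=\int R(x,dy)\,\xi(x,y)g(y)\chi_M(y)$ and $B_M$ the complementary term, the tail $B_M$ is controlled uniformly over the compact set $K=\{x_0\}\cup\{x_n\}$: using $|\xi(x,y)|\le B_\xi v^\delta(y)$, the elementary bound $v^\delta(y)\le M^{\delta-1}v(y)$ valid on $\{v(y)>M\}$ since $\delta<1$, and $Rv\le B_0 v$, one gets $\sup_{x\in K}|B_M(x)|\le B_\xi B_0(\sup_K v)M^{\delta-1}=:\epsilon_M\to 0$ as $M\to\infty$. (This is the one place where $\delta<1$ is essential.)

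It then remains to show, for each fixed $M$, that $A_M(x_n)\to A_M(x_0)$. Put $\phi_M=g\chi_M$, a bounded continuous function supported in $\{v\le M+1\}$ with $\|\phi_M\|_\infty\le 1$, so that $|\xi(x,y)\phi_M(y)|\le B_\xi(M+1)^\delta$ for all $x$. I would split $A_M(x_n)-A_M(x_0)$ into $\int R(x_n,dy)\,[\xi(x_n,y)-\xi(x_0,y)]\phi_M(y)$ plus $\int R(x_n,dy)\,\xi(x_0,y)\phi_M(y)-\int R(x_0,dy)\,\xi(x_0,y)\phi_M(y)$. The second bracket tends to $0$ because $\xi(x_0,\cdot)\phi_M$ is a single bounded continuous function and $R(x_n,\cdot)\Rightarrow R(x_0,\cdot)$. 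For the first term, joint continuity of $\xi$ gives $\sup_{y\in L}|\xi(x_n,y)-\xi(x_0,y)|\to 0$ on every compact $L$ (uniform continuity of $\xi$ on the compact set $K\times L$), while the integrands are uniformly bounded by $2B_\xi(M+1)^\delta$; combining this with tightness of $\{R(x_n,\cdot)\}$ — choose $L$ with $\sup_n R(x_n,L^c)<\epsilon$ — shows the first term tends to $0$ as well. Finally $\limsup_n|Zg(x_n)-Zg(x_0)|\le 2\epsilon_M$ for every $M$, and letting $M\to\infty$ completes the argument.

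The main obstacle is exactly the first term in the last split: one must pass to the limit in $\int R(x_n,dy)\,\psi_n(y)$ where \emph{both} the measure $R(x_n,\cdot)$ and the integrand $\psi_n=[\xi(x_n,\cdot)-\xi(x_0,\cdot)]\phi_M$ vary with $n$, so dominated convergence does not apply directly. The resolution is to use tightness (from Prokhorov, via the Feller/weak-convergence equivalence) to discard a uniformly small tail, and then invoke local uniform convergence of $\xi(x_n,\cdot)$ to $\xi(x_0,\cdot)$ on the retained compact set. Everything else is routine bookkeeping once the compact set $K$ and the truncation level $M$ are fixed.
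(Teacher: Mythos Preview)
Your proof is correct and follows essentially the same truncation strategy as the paper: both split $Zg$ into a main term supported on a sublevel set $\{v\le M\}$ and a tail controlled via $v^\delta(y)\le M^{\delta-1}v(y)$ on $\{v>M\}$ together with $Rv\le B_0 v$, yielding uniform convergence on compacts. The paper is terser on the main term, simply asserting that $Zg_n$ is continuous ``because $\xi(x,y)g_n(y)$ is bounded and continuous''; your explicit split isolating the $x$-dependence of $\xi$ and handling it via Prokhorov tightness plus uniform continuity on compacts is precisely what is needed to make that assertion rigorous, so your write-up is in fact more complete on this point.
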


\begin{proof}
The proof is based on a truncation argument:     
Consider the sequence of closed sets,
\[
S_n = \{ x \in\Re^\ell :  v(x) \le n \},
\quad\,n\geq 1.
\]
Take any  sequence of
continuous
 functions  $\{ \Chi_n :  n\ge 1\}$ satisfying  $0\le \Chi_n(x) \le 1$ for all $x$,
$ \Chi_n (x) = 1$ on $S_n$,  and  $ \Chi_n (x) = 0$ on $S_{n+1}^c$.   Hence $\Chi_n$ is   a continuous approximation to the indicator on $S_n$.   

Denote $g_n = g\Chi_n$ for a given bounded and continuous function  $g$.   
The function $Z g_n$ is continuous because  $\xi (x,y) g_n(y)$ is bounded and continuous.   It remains to show that $Zg = \lim_{n\to\infty}  Z g_n$,  and that the convergence is uniform on compact sets.  

Under the assumptions of the lemma,   for each $x$,
\[
\begin{aligned} 
| Zg\, (x) -  Zg_n\, (x) |  
	&  \le \| g\|_\infty  \int R(x,dy)  [1-\Chi_n(y)] |\xi(x,y)|
\\
  	& \le B_\xi  \| g\|_\infty  \int_{S_n^c}  R(x,dy)   v^\delta(y) .
\end{aligned}
\] 
Since $v(y)>n$ on $S_n^c$,
 this gives, for all $x$,
\[
| Zg\, (x) -  Zg_n\, (x) |    
\le \frac{1}{n^{1-\delta}}  B_0  B_\xi  \| g\|_\infty    v(x).
\]
It follows that  $ Z g_n\to Zg $ uniformly on compact sets,
since $v$ is assumed to be continuous.   
\end{proof}

\begin{lemma}
\label{t:Feller}
Subject to Assumptions A1 and A2,
\begin{romannum}
\item  $P^t f$ is continuous if   $|f|^2\in\Lvx$ and $f$ is continuous.
\item  The vector-valued function  $Q^t \nabla f $ is continuous,    provided  
$\nabla f$ is continuous, 
$|f|^2\in\Lvx$,  and
$\| \nabla f \|_2^2\in\Lvx$. 
\end{romannum}
\end{lemma}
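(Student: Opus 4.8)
The plan is to deduce both statements from Lemma~\ref{t:FellerGen}, after first recording two elementary facts that hold under A1--A2. First, the one-step kernel $P$ has the Feller property: for bounded continuous $g$, $Pg(x)=\Expect[g(a(x,N(1)))]$ is continuous by dominated convergence, since $x\mapsto g(a(x,N(1)))$ is continuous (by A2.\ref{a:nabla-a}, in particular continuity of $a$ in its first variable) and bounded; composing, $P^t$ is Feller for every $t\ge 0$. Second, applying \eqref{e:v-uni} to $f=v$ (for which $\|v\|_v=1$) gives $P^tv\,(x)\le \pi(v)+b_0\rho_0^t v(x)\le B_0\,v(x)$ with $B_0:=\pi(v)+b_0<\infty$, using $v\ge 1$.

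For part (i), write $g:=f/\sqrt v$. Since $v$ is continuous and bounded below by $1$, $g$ is continuous, and $|g|\le \|f^2\|_v^{1/2}$, so $g$ is bounded. Then
\[
P^tf\,(x)=\int P^t(x,dy)\,\sqrt{v(y)}\,g(y)=Zg\,(x),
\]
where $Z$ is the kernel with density $\xi(x,y):=\sqrt{v(y)}$ relative to $R:=P^t$. The hypotheses of Lemma~\ref{t:FellerGen} hold with $\delta=\tfrac12$: $R=P^t$ is Feller and satisfies $Rv\le B_0v$; $\xi$ is continuous; and $B_\xi=\sup_{x,y}\sqrt{v(y)}/v^{1/2}(y)=1<\infty$. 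Hence $Zg=P^tf$ is continuous, because $g$ is bounded and continuous.

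For part (ii), I would reduce to part (i) by passing to the enlarged Markov chain $\{(X(t),\Sens(t))\}$ on $\Re^\ell\times\Re^{\ell\times\ell}$, whose (time-homogeneous) transition kernel I denote $\widehat P$. By A2.\ref{a:nabla-a} the map $(x,S)\mapsto(a(x,N),[\grad_x a(x,N)]^\transpose S)$ is continuous, so $\widehat P$, hence $\widehat P^{\,t}$, is Feller. Set $\widehat v(x,S):=v(x)(1+\|S\|^2)$, which is continuous and $\ge 1$; since $\|\dffA(t)\|\le C:=\sup_{x,n}\|\grad_x a(x,n)\|<\infty$, a one-step computation gives $\widehat P\widehat v\le (1+C^2)B_1\widehat v$ with $B_1:=\sup_x Pv(x)/v(x)<\infty$, and iterating, $\widehat P^{\,t}\widehat v\le \widehat B_0^{\,t}\widehat v$. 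Now fix $1\le i\le\ell$ and let $F_i(x,S):=(S^\transpose\grad f(x))_i=\sum_j S_{ji}\,\partial_j f(x)$, which is continuous and satisfies $F_i^2\le \|S\|^2\|\grad f(x)\|_2^2\le \big\|\,\|\grad f\|_2^2\big\|_v\,\widehat v(x,S)$, i.e.\ $F_i^2\in L_\infty^{\widehat v}$. By \eqref{e:Qt} together with $\Sens(0)=I$ (cf.~\eqref{e:SensDef}),
\[
[Q^t\grad f\,(x)]_i=\Expect\big[(\Sens^\transpose(t)\grad f(X(t)))_i\mid X(0)=x,\ \Sens(0)=I\big]=\widehat P^{\,t}F_i\,(x,I).
\]
Applying part (i) to the chain $\{(X(t),\Sens(t))\}$ with weight $\widehat v$ (the argument for part (i) used only that the kernel is Feller, that it satisfies a $\widehat v$-drift, and that $\widehat v$ is continuous), $\widehat P^{\,t}F_i$ is continuous on $\Re^\ell\times\Re^{\ell\times\ell}$; restricting to $S=I$ shows $[Q^t\grad f]_i$ is continuous, for each $i$.

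The main obstacle is the bookkeeping in part (ii): one must pick the weight $\widehat v$ so that it simultaneously dominates $F_i^2$ and admits a geometric $\widehat P$-drift, and both are possible precisely because A2.\ref{a:nabla-a} forces $\|\Sens(t)\|$ to grow at most at the deterministic rate $C^t$. This same deterministic bound is what one would need to keep $B_\xi$ finite if one instead tried to apply Lemma~\ref{t:FellerGen} directly with the conditional density $\Expect[\Sens_{ji}(t)\mid X(0)=x,\,X(t)=y]$ --- an approach I would avoid, since continuity of that conditional expectation is not available without further hypotheses, whereas the enlarged-chain route sidesteps it entirely. An essentially equivalent alternative, closer to the proof of \Prop{t:elegantQ}, is to truncate $\grad f$ by $\Chi_n$, observe that $Q^t(\Chi_n\grad f)$ is continuous by dominated convergence (with dominating constant $C^t\|\Chi_n\grad f\|_\infty$), and verify $Q^t(\Chi_n\grad f)\to Q^t\grad f$ uniformly on compact sets using $\|\grad f\|_2\le \big\|\,\|\grad f\|_2^2\big\|_v^{1/2}\sqrt v$, the bound $\sqrt v\,\one\{v>n\}\le v/\sqrt n$, and $P^tv\le B_0v$.
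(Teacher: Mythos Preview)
Your proof is correct. For part~(i), the paper also applies Lemma~\ref{t:FellerGen} with $R=P^t$, but with the dual factorization: it takes $\xi(x,y)=f(y)$ and $g\equiv 1$, using $|f|^2\in\Lvx$ to get $B_\xi<\infty$ with $\delta=\tfrac12$. Your choice $\xi=\sqrt v$, $g=f/\sqrt v$ achieves the same thing, so part~(i) is essentially the paper's argument.

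For part~(ii) you take a genuinely different route. The paper applies Lemma~\ref{t:FellerGen} componentwise, writing $Q^t_{i,j}(x,dy)=P^t(x,dy)\,q^t_{i,j}(x,y)$ and setting $\xi(x,y)=q^t_{i,j}(x,y)\,[\nabla f(y)]_j$; it asserts that the conditional density $q^t_{i,j}$ is continuous and bounded, and then reads off continuity of each term $\int P^t(x,dy)\,q^t_{i,j}(x,y)\,[\nabla f(y)]_j$. You instead lift to the enlarged chain $(X,\Sens)$, build a weight $\widehat v$ with a $\widehat P$-drift, and reapply the argument of part~(i). Your route has the advantage --- which you explicitly flag --- of sidestepping any claim about joint continuity of the conditional expectation $q^t_{i,j}(x,y)=\Expect[\Sens_{i,j}(t)\mid X(0)=x,\,X(t)=y]$, a point the paper does not justify. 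The cost is the bookkeeping for $\widehat v$, which the deterministic bound $\|\Sens(t)\|\le C^t$ coming from A2.\ref{a:nabla-a} makes painless. Your truncation alternative at the end is also sound and is closer in spirit to the paper's line, while still relying only on $\|\Sens(t)\|\le C^t$ rather than on regularity of $q^t_{i,j}$.
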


\begin{proof}
Both parts follow from \Lemma{t:FellerGen},  with  $R = P^t$.   
The bound $P^t v\le B_0 v$ holds under~A1,  and in fact the 
constant $B_0$ can be chosen independent of $t$.

For part~(i), choose $\xi(x,y) = f(y)$.  The Feller property for the 
kernel $Z$ defined in \Lemma{t:FellerGen}  implies in particular that $Zg$ is continuous when $g\equiv 1$.  $Zg= P^t f$ in    
 this special case. 
For part~(ii), observe that each $Q^t_{i,j}$ 
$1\leq i,j\leq\ell$, admits a continuous 
and bounded density by its definition, 
cf.~(\ref{e:SP_sens_disc}),~(\ref{e:Qt}):
\[
Q^t_{i,j}(x,dy)   =P^t(x,dy)  q^t_{i,j}(x,y).
\]
So, 
we have for each $i$ and $x$,
\[
[Q^t \nabla f\, (x) ]_i  = \sum_j   \int  P^t(x,dy)  q^t_{i,j}(x,y)   [\nabla f(y)]_j  \,.
\]
Fix $i,j$ and let $\xi(x,y) :=  q^t_{i,j}(x,y)   [\nabla f(y)]_j$.    
Then Lemma~\ref{t:FellerGen}
implies that the $(i,j)$-term in the last sum,
$ \int  P^t(x,dy)  q^t_{i,j}(x,y)   [\nabla f(y)]_j$,
is continuous in $x$. 
\end{proof}

\section{Proof of \Lemma{t:bRep}}
\label{s:proof_of_bRep}

The following shift-operator on sample space is defined 
for a stationary version of $\bfmX$: For a random variable 
of the form,
\[
Z = F(X(r),N(r), \dots, X(s), N(s))\,,\quad \text{with $r\le s$,}
\]
we denote, for any integer $k$,
\[
\Theta^k Z = F(X(r+k),N(r+k), \dots, X(s+k), N(s+k)).
\]
Consequently, viewing $\Sens(t)$ as a function
of $\dffA(1),\ldots,\dffA(t)$ as in
the evolution equation \eqref{e:SP_sens_disc},
we have:
\begin{equation}
\Theta^k \Sens(t) =  \dffA(t+k)\cdots\dffA(2+k)\dffA(1+k) .
\label{e:shift}
\end{equation}

The representation \eqref{e:OmDream} for $\grad h_\beta$ is valid under 
Assumption~A3, by Lemma~\ref{t:OmExist}.
Using this and \eqref{e:SP_sens_disc} gives the first representation in \eqref{e:bpsi_disc_init}:
\begin{equation}
\begin{aligned}
b^\transpose  &= \int \Expect_x \big [(\Omega_\beta \grad c(x))^\transpose \gradpsi(x) \big ]\pi(dx) 
\\
&= \sum_{t=0}^{\infty} \beta^t  \int \Expect_x \big [( \Sens^\transpose(t)  \grad c(x))^\transpose \gradpsi(x) \big ]\pi(dx)
\\
&= \sum_{t=0}^{\infty} \beta^t    \Expect \bigl[ \bigl ( \Sens^\transpose(t) \grad c(X(t)) \bigr)^\transpose \gradpsi(X(0))   \bigr] \, .
\label{e:bpsi_disc_init2}
\end{aligned}
\end{equation}  
Stationarity implies that for any $t,k\in\ZZ$,
\begin{equation*}
\begin{aligned}
\Expect \Bigl[ \Bigl (&\Sens^\transpose(t)  \grad c(X(t)) \Bigr)^\transpose \gradpsi(X(0)) \Bigr] 
\\
&=  \Expect \Bigl[ \Bigl ( [\Theta^k\Sens^\transpose(t) ]\grad c(X(t+k)) \Bigr)^\transpose \gradpsi(X(k)) \Bigr] \, .
\end{aligned}
\end{equation*} 
Setting $k=-t$, the first representation in \eqref{e:bpsi_disc_init} becomes:
\[
\begin{aligned}
b^\transpose  &= \sum_{t=0}^{\infty} \beta^t \Expect \Bigl[ \big(\grad c(X(0))\big)^\transpose \big( \Theta^{-t} \Sens(t) \big)  \gradpsi(X(-t)) \Bigr]
\\
&= \Expect \Bigl[\big(\grad c(X(0))\big)^\transpose \Bigl(  \sum_{t=0}^{\infty} \beta^t  \big( \Theta^{-t} \Sens(t) \big) \gradpsi(X(-t)) \Bigr) \Bigr]\,,
\end{aligned}
\]
where last equality   is obtained under Assumption~A3 by applying Fubini's theorem. 
This combined with \eqref{e:EliVecDT}
completes the proof.
\qed


%
%
%
%
%
%
%
%
%
%
%

\section{Variance Analysis of $\grad$-LSTD($\lambda$) Algorithms}
\label{s:var_ana_LSTD}


As in the case of the classical LSTD($\lambda$) algorithms, the $\grad$-LSTD($\lambda$) algorithms also belong to a more general class of root-finding algorithms known as stochastic approximation (SA). Following the theory for variance analysis of \emph{linear} SA recursions, under slightly stronger conditions than \Prop{t:dLSTDconverges} and \Prop{t:dLSTDlambdaEqdLSTD}, the asymptotic variance of the  $\grad$-LSTD($\lambda$) algorithm is given by the following expression \cite{kusyin97,bor08a,devmey17a}:
\begin{equation}
\begin{aligned}
\Sigma_\theta 
& \eqdef \lim_{T \to \infty} T \Expect \big[ \big (\theta(T) - \theta^* \big ) \big (\theta(T) - \theta^* \big )^\transpose \big] 
\\
& = M^{-1} \Sigma_\Delta \big( M^{-1} \big)^\transpose
\label{e:nabla_TD_avar}
\end{aligned}
\end{equation}
where, the matrix $M$ is defined in \eqref{e:MDef_Lambda}, and   the ``noise covariance matrix"  is defined as follows: define 
\begin{equation}
\Delta(t) \eqdef \tilM(t) \theta^* + \tilb(t)  
\label{e:delta_t}
\end{equation}
where (with the quantities defined in Algorithm~\ref{dLSTD}),
\begin{equation}
\begin{aligned}
\tilM(t) & \eqdef M - \bigl[\grad \psi(X(t)) - \beta \dffA^\transpose(t+1)  \grad \psi(X(t+1)) \bigr]^\transpose  \elig(t)
\\
\tilb(t) & \eqdef b - \elig^\transpose(t)\grad c(X(t)) 
\end{aligned}
\end{equation}
with $b$ is defined in \eqref{e:bDef_Lambda},   $\theta^* = M^{-1} b$,  and all stochastic processes are assumed stationary. 
The noise covariance matrix   is then expressed by the two equivalent formula:
\begin{equation}
\Sigma_\Delta = \lim_{T\to \infty} \frac{1}{T} \Expect\big[ S(T) S(T)^\transpose \big] = 	\sum_{t = -\infty}^{\infty} R(t)
\label{e:sigma_delta}
\end{equation}
where 
\begin{equation}
R(t) = \Expect \big[ \big(\tilM(t) \theta^* - \tilb(t) \big) \big(\tilM(0) \theta^* - \tilb(0) \big)^\transpose \big], \,\quad t\ge 0\,,
\label{e:R_t}
\end{equation} 
with $R(-t)=R(t)^\transpose$,   and $S(T) = \sum_{t = 0}^{T - 1} \Delta(t)$.

An asymptotic variance formula for LSTD($\lambda$) algorithms can be obtained in a straightforward manner, using the definitions of $M$, $M(t)$, $b$, and $b(t)$ as in Algorithm~\ref{LSTD} and \Prop{t:LSTDconverges}.

In the following, we show how these expressions can be used to calculate the asymptotic variance of the two  algorithms when applied to the simple linear model described in Section~\ref{s:lin_mod}, and how the algorithms compare with each other with respect to this quantity.

\section{Asymptotic Variance for the Linear Model}

Consider the application of $\nabla$-LSTD and LSTD algorithms to estimate the value function for the linear model that is analyzed in Section~\ref{s:lin_mod}:
\begin{equation}
X(t+1) = a X(t) + N(t+1),\quad t\geq 1,
\label{e:linmodel}
\end{equation}
with the cost function defined to be a quadratic: $c(x) = x^2$.

The one dimensional basis function for $\nabla$-LSTD is given by $\gradpsi(x) = 2 x$, so that the estimate of the value function is $\nabla h^\theta_\beta(x) = 2 \theta x$\footnote{The constants that appear when taking derivatives are retained so that the parameter estimates obtained using the $\nabla$-LSTD algorithms can be compared to the ones obtained using the LSTD algorithm}. In this case, $M$ and $b$ are scalar quantities, and  $\bfvarphi \equiv \bfelig$ is a scalar sequence: 
\[
M = 4 \Expect [X^2 (t)] \qquad b = 2 \Expect[ X(t) \varphi(t) ]
\]
with $\varphi(t)$   defined in \eqref{e:dLSTDLinear}, and expectations in steady state.

Using the fact that $M \theta^* = b$, the auto-correlation function $R(t)$ defined in \eqref{e:R_t} is given by:
\[
\begin{aligned}
R(t) & \!=\! \Expect \big [  \big (4 \theta^* X^2 (t)  - 2 X(t) \varphi(t) \big ) (4 \theta^* X^2 (0)  - 2 X(0) \varphi(0) \big ) \big]
\end{aligned}
\]
In the case $\{N(t): t \geq 1\}$ is i.i.d Gaussian with mean $0$ and variance $\sigma_N^2$, using \eqref{e:linmodel}, it can be shown that the above expression simplifies to:
\begin{equation}
\begin{aligned}
R(t) = & \big( (\theta^*)^2 + 4 - 4 \theta^* \big) \Expect \big[ X^2(t) X^2(0) \big] 
\\
& - 2 \theta^* a \beta \sum_{s = 0}^{t-1}  (a\beta)^s a^{t-s} \Expect \big[ X^2(s) X^2(0) \big]
\end{aligned}
\label{e:R_t_nabla_LSTD_LQR}
\end{equation}
where, for each $t \geq 1$:
\begin{equation}
\Expect[X^2(t) X^2(0)] = a^{2t} \Expect [X^4(t)] + \big(\sum_{s = 0}^{t-1} a^{2s} \big) \sigma_N^2 \Expect [ X^2 (t) ]
\end{equation}
and the steady state expectations  are given by:
\begin{equation}
\Expect[X^2(t)] \eqdef \frac{\sigma_N^2}{1 - \alpha^2} \hspace{0.2in} \Expect[X^4(t)] \eqdef \frac{\sigma_N^2}{1-\alpha^4} \Big( \frac{6}{1 - \alpha^2} + 3 \Big)
\end{equation}
The noise covariance $\Sigma_{\Delta}$ is then obtained using the second definition in \eqref{e:sigma_delta}. The plots for theoretical values of the variance $\Sigma_\theta$ in Figure~\ref{f:HistLinear} is obtained using this expression.

Similarly, for the LSTD($\lambda$) algorithm, since $\psi(x) = (1,\, x^2)^\transpose$, we have
\[
M = \Expect \begin{bmatrix} 
1 & X^2(t) \\
X^2(t) & X^4(t)
\end{bmatrix} 
\qquad 
b = \Expect \big[X^2(t) \varphi(t) \big]
\]
with $\varphi(t)$ recursively defined in \eqref{e:LSTDLinear}. The auto-correlation can then be obtained using \eqref{e:R_t}, where:
\begin{equation}
\begin{aligned}
\tilM(t) & \eqdef M -\psi(X(t)) \psi^\transpose(X(t))
\\
\tilb(t) & \eqdef b - \varphi(t)c(X(t))
\end{aligned}
\end{equation}
and $\theta^* = M^{-1} b$.
However, the precise expression for $R(t)$ becomes much more complicated than the one obtained in \eqref{e:R_t_nabla_LSTD_LQR}, since it involves calculation of moments up to eighth order. 

An alternative way to obtain a theoretical formula for $\Sigma_\Delta$ is using the first definition in \eqref{e:sigma_delta}. For   fixed $T$ large enough, one can approximately obtain $\Sigma_{\Delta}$ by estimating the expectation in \eqref{e:sigma_delta} using Monte-Carlo:
\[
\Expect[S(T) S(T)^\transpose ] \approx \frac{1}{N} \sum_{i = 0}^{N} S^{(i)}(T) \big( S^{(i)}(T) \big)^\transpose\,,
\] 
The plots for theoretical values of the variance $\Sigma_\theta$ in Figure~\ref{f:HistLinear} were obtained using this approximation.

\newpage

\bibliographystyle{plain}

\end{document}